\documentclass[letterpaper, 10 pt, conference]{ieeeconf}  

\IEEEoverridecommandlockouts                              
\usepackage{graphics} 
\usepackage{epsfig} 
\usepackage{mathptmx} 
\usepackage{times} 
\usepackage{amsmath} 
\usepackage{amssymb}  

\usepackage{mathtools}

\usepackage{algorithm}
\usepackage{algorithmic}
\usepackage{xcolor}
\usepackage{hyperref}

\usepackage{color}

\usepackage{blindtext}
\usepackage{enumerate}
\usepackage{graphicx}
\usepackage{amsfonts, amsmath, bm, amssymb}
\usepackage{dsfont}
\usepackage{wrapfig}
\usepackage{subcaption}

\usepackage{pifont}
\newcommand{\RR}{\mathds{R}}

\usepackage{xspace}
\makeatletter
\DeclareRobustCommand\onedot{\futurelet\@let@token\@onedot}
\def\@onedot{\ifx\@let@token.\else.\null\fi\xspace}

\makeatother

\newcommand{\Lc}{\mathcal{L}}

\newcommand{\Nc}{\mathcal{N}}

\newcommand{\Nb}{\mathbb{N}}

\newcommand{\Rb}{\mathbb{R}}

\newcommand{\mv}{\mathbf{m}}

\newcommand{\sv}{\mathbf{s}}

\newcommand{\wv}{\mathbf{w}}
\newcommand{\xv}{\mathbf{x}}

\ifx\BlackBox\undefined
\newcommand{\BlackBox}{\rule{1.5ex}{1.5ex}}  
\fi
\ifx\QED\undefined
\def\QED{~\rule[-1pt]{5pt}{5pt}\par\medskip}
\fi
\ifx\proof\undefined
\newenvironment{proof}{\par\noindent{\em Proof:\ }}{\hfill\BlackBox\\}
\fi
\ifx\theorem\undefined
\newtheorem{theorem}{Theorem}
\fi
\ifx\example\undefined
\newtheorem{example}{Example}
\fi
\ifx\property\undefined
\newtheorem{property}{Property}
\fi
\ifx\lemma\undefined
\newtheorem{lemma}{Lemma}
\fi
\ifx\proposition\undefined
\newtheorem{proposition}{Proposition}
\fi
\ifx\fact\undefined
\newtheorem{fact}{Fact}
\fi
\ifx\remark\undefined
\newtheorem{remark}{Remark}
\fi
\ifx\corollary\undefined
\newtheorem{corollary}{Corollary}
\fi
\ifx\definition\undefined
\newtheorem{definition}{Definition}
\fi
\ifx\conjecture\undefined
\newtheorem{conjecture}{Conjecture}
\fi
\ifx\axiom\undefined
\newtheorem{axiom}[theorem]{Axiom}
\fi
\ifx\claim\undefined
\newtheorem{claim}[theorem]{Claim}
\fi
\ifx\assumption\undefined
\newtheorem{assumption}{Assumption}
\fi
\ifx\question\undefined
\newtheorem{question}{Question}
\fi
\ifx\problem\undefined
\newtheorem{problem}{Problem}
\fi

\title{\LARGE \bf
Score-based Outlier Generation\\ via Controlling the Radon-Nikodym Derivative
}

\author{Amartya Mukherjee, Tristan Milne, Kry Yik-Chau Lui, Stephanie Hazlewood, Jun Liu
\thanks{This research was supported by Mitacs and the Royal Bank of Canada under the Mitacs Accelerate program award, Application Reference IT49478.}
\thanks{
Amartya Mukherjee and Jun Liu are with the Department of Applied Mathematics, University of Waterloo, Waterloo, Ontario, Canada N2L 3G1 (email: {\tt\small (a29mukhe,j.liu)@uwaterloo.ca}).
}
\thanks{Tristan Milne, Stephanie Hazlewood, and Kry Yik-Chau Lui are with the Royal Bank of Canada, 1 Place Ville Marie, Montreal, Quebec, H3C 3A9 (email: {\tt\small (tristan.milne,stephanie.hazlewood)@rbc.com, yikchau.y.lui@borealisai.com}).}
}

\begin{document}



\maketitle

\begin{abstract}
Outliers are important for stress-testing algorithms and understanding system behaviour under rare conditions.
Despite being commonly described as low-likelihood events, existing generative approaches rarely control likelihood explicitly.
In this work, we introduce a measure-theoretic notion of outliers based on the distribution of log-likelihood values, which is guaranteed to assign higher probability mass to low-likelihood events with a specifiable magnitude.
Building on this formulation, we derive how likelihood reweighting modifies the diffusion score and use this relation to motivate a controlled modification of the reverse-time dynamics.
In particular, likelihood reweighting implies a scaling of the score function with a control term derived from the Radon-Nikodym derivative of the likelihood distributions. Correspondingly, the updated score function can be obtained with no retraining of the diffusion model.
We exploit the Ornstein–Uhlenbeck semigroup underlying diffusion models to motivate an exponentially interpolated controller which approximates the true control.
Experiments demonstrate controlled generation of low-likelihood samples while remaining consistent with the data geometry.
\end{abstract}

Outliers play an important role in evaluating the reliability of algorithms and decision-making systems. In many applications, it is important to understand how systems behave under rare or atypical conditions that deviate from the patterns commonly observed in training or historical data. For example, in time series such as financial data \cite{franses2000non}, outliers model how capital markets respond to anomalous shifts in the financial landscape. In tabular data such as electronic health records \cite{hu2024synthetic}, outliers are crucial for studying patients who deviate from typical disease profiles and for developing treatment strategies tailored to these atypical cases. Generating such rare scenarios is therefore essential for stress-testing algorithms, improving model robustness, and understanding system behaviour under distributional shifts.

Existing approaches to outlier generation suffer from two important limitations. First, although outliers are commonly motivated as low-likelihood samples under a reference distribution \cite{tao2023non,du2023dream}, most methods rely on heuristic notions such as reconstruction error, latent-space distance, or classifier uncertainty rather than explicitly controlling likelihood itself. Hence, they provide little formal interpretability over the statistical rarity of the generated samples. Second, many existing approaches require specialized architectures or training objectives designed specifically for outlier synthesis \cite{gu2025beyond}. Outlier generation is therefore treated as a separate learning problem rather than a controllable sampling problem.

Diffusion models (DMs) have emerged as a powerful framework in generative modelling, achieving remarkable success in domains such as image synthesis~\cite{HoJA2020,rombach2022high,sauer2024fast,peebles2023scalable} and video generation~\cite{ho2022imagen,blattmann2023stable}.
These models are typically trained to reverse a stochastic process that gradually transforms samples from a data distribution into noise via a learned score function. 
These properties also make DMs particularly attractive for outlier generation. DMs admit tractable likelihood estimation through the probability-flow ordinary differential equation (PF-ODE) and evolve according to continuous-time dynamics that can be controlled. In fact, as we will show, a DM trained on regular data can be steered toward low-likelihood regions at inference time, without retraining or fine-tuning.
In this work, we propose a distributional control-theoretic framework for outlier generation in DMs. Unlike previous approaches that identify individual anomalous samples, we control the likelihood distribution. 
The log-likelihood map induces a one-dimensional pushforward measure, and generating outliers corresponds to steering this measure towards lower values.
Our key observation is that likelihood reweighting implies a scaling of the score function with a controller determined by a Radon–Nikodym (RN) derivative on likelihood spaces. This exact pointwise identity motivates a controlled PF-ODE.
To obtain a practical controller, we exploit the exponential convergence of the Ornstein–Uhlenbeck (OU) semigroup that underlies the forward diffusion process. We use an exponentially decaying approximation to the initial likelihood-reweighting control.

Numerical experiments on Gaussian mixture examples and the CIFAR-10 dataset demonstrate that the proposed method successfully steers diffusion sampling toward low-likelihood regions while remaining consistent with the underlying data geometry. 
The results suggest that DMs can be interpreted as controllable systems and opens the door to distribution steering and generative modelling under functional constraints.









\section{Background}

\subsection{Score-Based Diffusion Models}

The diffusion process \cite{SongDKKEP2021} can be parameterized in continuous time by the following Ornstein-Uhlenbeck (OU) SDE
\begin{equation}\label{eq:forward_SDE}
    d\xv_t=-\xv_tdt+\sqrt{2}d\wv_t,\quad\xv_0\sim p_{data}
\end{equation}
for $t\in[0,T]$, where $\xv_0$ is sampled from a data distribution and $\wv_t$ is Brownian motion. As $t$ grows, $\xv_t$ diffuses from the clean data $\xv_0$ into Gaussian noise.
We then generate realistic data by sampling Gaussian noise $\xv_T$ and solving the reverse-time SDE
\begin{equation}\label{eq:reverse_SDE}
    d\xv_t=[-\xv_t-2\nabla\log p_t(\xv_t)]dt+\sqrt{2}d\overline{\wv}_t,
\end{equation}
where $\overline{\wv}_t$ denotes a reverse Brownian motion. Alternatively, we can solve the probability flow ODE (PF-ODE~\cite{SongDKKEP2021}) in reverse-time
\begin{equation}\label{eq:pf_ode_p}
    \dot\xv_t=-\xv_t-\nabla\log p_t(\xv_t),\quad\xv_T\sim\Nc(0,I).
\end{equation}
It is common practice in the DM literature to train a neural network $\sv_\theta(\xv_t,t)$ to approximate $\nabla\log p_t(\xv_t)$, from which realistic training data can then be generated.

\subsection{Fokker-Planck-Kolmogorov Equation}

The Fokker-Planck-Kolmogorov (FPK) \cite{Bogachev2015} equation that governs the evolution of the underlying distribution from the SDEs in Equation \ref{eq:forward_SDE} (forward time) and Equation \ref{eq:reverse_SDE} (reverse time) is given by the partial differential equation (PDE):
\begin{align}
    \frac{dp_t(\xv_t)}{dt}=\nabla\cdot[\xv_tp_t(\xv_t)]+\Delta p_t(\xv_t),
\end{align}
where $\nabla\cdot$ is the divergence and $\Delta$ is the Laplacian operator.
Recently, density steering via controlling the FPK equation has been of interest to the control community \cite{fleig2017optimal,chertovskih2024optimal,sinigaglia2022density}.
It is often convenient to express this evolution in terms of the log-density to obtain log-likelihoods. The following result gives the corresponding PDE satisfied by the log-density.
\begin{proposition}[Proposition 3.1 of \cite{LaiTMUME2023}]\label{prop:FPK}
    Assume the ground truth density $p_t(\xv)$ is sufficiently smooth on $\Rb^n\times[0,T]$ with its log-density denoted as $l_t(\xv):=\log p_t(\xv)$. Then for all $(\xv,t)$, its log-density satisfies the PDE
\begin{equation}\label{eq:log_FPK}
    \partial_t l_t(\xv)
    =\xv\cdot\nabla l_t(\xv)+n+\Delta l_t(\xv)+\|\nabla l_t(\xv)\|^2.
\end{equation}
\end{proposition}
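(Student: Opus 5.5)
The plan is to start from the Fokker--Planck--Kolmogorov equation stated just before the proposition. It is the forward equation for the OU process \eqref{eq:forward_SDE}, whose drift is $-\xv$ and whose diffusion coefficient is $\sqrt{2}$:
\begin{equation*}
\partial_t p_t=\nabla\cdot(\xv\,p_t)+\Delta p_t .
\end{equation*}
The idea is to substitute $p_t=e^{l_t}$ and divide by $p_t>0$. Smoothness and strict positivity are exactly what the hypothesis ``sufficiently smooth'' is meant to supply. Positivity is automatic for $t>0$, since convolution with a Gaussian kernel yields a strictly positive density. At $t=0$ we simply take positivity as part of the assumption, because $l_t$ must be defined there.

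First, the time derivative gives $\partial_t p_t=p_t\,\partial_t l_t$.

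Second, expand the divergence term by the product rule:
\begin{equation*}
\nabla\cdot(\xv\,p_t)=(\nabla\cdot\xv)\,p_t+\xv\cdot\nabla p_t=n\,p_t+p_t\,\xv\cdot\nabla l_t ,
\end{equation*}
using $\nabla\cdot\xv=n$ in $\Rb^n$ and $\nabla p_t=p_t\nabla l_t$.

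Third, compute the Laplacian:
\begin{equation*}
\Delta p_t=\nabla\cdot(p_t\nabla l_t)=\nabla p_t\cdot\nabla l_t+p_t\,\Delta l_t=p_t\big(\|\nabla l_t\|^2+\Delta l_t\big).
\end{equation*}

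Finally, collect the three pieces and divide by $p_t$. This yields \eqref{eq:log_FPK} pointwise for all $(\xv,t)$.

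The main point of care is the justification rather than the algebra, which is routine. We need the FPK equation to hold classically, and $\log p_t$ to be well defined and twice differentiable. I would handle this by noting that the OU transition kernel is Gaussian. For $t>0$ this makes $p_t$ a Gaussian convolution of a rescaled $p_{data}$, so it is $C^\infty$ and strictly positive. The smoothness assumption then covers $t=0$.

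A reverse-time reading of the equation is not needed. The marginals $p_t$ of the reverse SDE \eqref{eq:reverse_SDE} coincide with the forward marginals, so the same PDE governs both.
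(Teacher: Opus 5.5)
Your proof is correct and takes the same route as the source: the paper gives no proof of its own beyond citing Proposition 3.1 of \cite{LaiTMUME2023}, whose argument is exactly the substitution $p_t=e^{l_t}$ into the Fokker--Planck equation followed by division by $p_t$, here specialized to drift $-\xv$ and diffusion coefficient $\sqrt{2}$. Your remarks on positivity and classical smoothness for $t>0$ via the Gaussian transition kernel make explicit what the cited statement leaves inside the phrase ``sufficiently smooth.''
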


\subsection{Ornstein-Uhlenbeck Operator}

The forward diffusion process underlying score-based models corresponds to OU dynamics, whose generator plays an important role in our analysis of likelihood evolution.

\begin{definition}[Ornstein-Uhlenbeck (OU) Generator]\label{def:ou_generator}
    The OU generator $\Lc$ acts on smooth functions $f\in C^2(\Rb^n)$ by
    \[\Lc f(\xv)=\Delta f(\xv)-\xv\cdot\nabla f(\xv).\]
\end{definition}

\begin{theorem}[Theorem 3.8 of \cite{bogachev2018ornstein}]\label{def:ou_spectrum}
    Let $\gamma:=\Nc(0,I)$ be the standard Gaussian measure. In the Hilbert space $L^2(\gamma),$ $\Lc$ is self-adjoint and has discrete spectrum $\{0,-1,-2,...\}$ of non-positive integers.
    The eigenfunctions are the Hermite polynomials \cite{hermite1864nouveau} $(H_\alpha)_{\alpha\in\Nb^n}$, orthogonal in $L^2(\gamma),$ satisfying $\Lc H_\alpha=-|\alpha|H_\alpha$.
\end{theorem}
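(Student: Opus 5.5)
The plan is to establish self-adjointness through Gaussian integration by parts, and then handle the spectrum by exhibiting the Hermite polynomials as eigenfunctions and showing they form a complete orthogonal basis of $L^2(\gamma)$.

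\textbf{Step 1: integration by parts.} Write the density of $\gamma$ as $\varphi(\xv)=(2\pi)^{-n/2}e^{-\|\xv\|^2/2}$ and note that $\nabla\varphi=-\xv\varphi$. Then
\[
\nabla\cdot(\varphi\nabla f)=\varphi(\Delta f-\xv\cdot\nabla f)=\varphi\,\Lc f .
\]
For $f,g$ smooth with polynomial growth (for instance polynomials), the boundary terms vanish because of the Gaussian decay, and we obtain the Dirichlet form identity
\[
\int (\Lc f)\,g\,d\gamma=-\int \nabla f\cdot\nabla g\,d\gamma .
\]
This expression is symmetric in $f$ and $g$ and nonpositive when $f=g$. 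So $\Lc$ is symmetric and nonpositive on the dense core of polynomials.

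\textbf{Step 2: eigenfunctions.} First treat dimension one. Define $H_k(x)=(-1)^k e^{x^2/2}\frac{d^k}{dx^k}e^{-x^2/2}$. These satisfy
\[
H_k'=kH_{k-1},\qquad H_{k+1}=xH_k-H_k'.
\]
Differentiating the second relation and substituting the first gives $H_k''-xH_k'=-kH_k$. For a multi-index $\alpha$, set $H_\alpha(\xv)=\prod_i H_{\alpha_i}(x_i)$. Since $\Lc$ is a sum of one-dimensional OU operators acting on separate coordinates, $\Lc H_\alpha=-|\alpha|H_\alpha$. Orthogonality follows from symmetry for distinct eigenvalues; for equal $|\alpha|$ it follows from the product structure together with one-dimensional orthogonality, which in turn comes from the Rodrigues formula and repeated integration by parts.

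\textbf{Step 3: completeness.} This is the main obstacle. I would show that polynomials are dense in $L^2(\gamma)$. Suppose $h\in L^2(\gamma)$ is orthogonal to every polynomial. The function $F(\mathbf{z})=\int h(\xv)e^{\mathbf{z}\cdot\xv}\,d\gamma(\xv)$ is entire in $\mathbf{z}\in\mathbb{C}^n$, because $e^{|\mathbf{z}||\xv|}$ lies in $L^2(\gamma)$. All derivatives of $F$ at $\mathbf{z}=0$ vanish, so $F\equiv0$. Taking $\mathbf{z}$ purely imaginary, the Fourier transform of $h\varphi$ vanishes, hence $h=0$. Since the polynomials of degree at most $d$ are spanned by $\{H_\alpha:|\alpha|\le d\}$, the family $(H_\alpha)$ is a complete orthogonal basis.

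\textbf{Step 4: self-adjointness and spectrum.} Define $\Lc$ as the diagonal operator $\Lc\big(\sum c_\alpha H_\alpha\big)=-\sum|\alpha|c_\alpha H_\alpha$, with domain $\{\sum|\alpha|^2|c_\alpha|^2\|H_\alpha\|^2<\infty\}$. A real diagonal multiplication operator on an orthogonal basis is self-adjoint, and this operator agrees with the closure of $\Lc$ on polynomials. Its spectrum is therefore the closure of the set of diagonal entries, which is $\{0,-1,-2,\dots\}$. Each eigenvalue $-k$ has finite multiplicity, namely the number of $\alpha$ with $|\alpha|=k$, and the eigenvalues are isolated, so the spectrum is discrete.

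Finally, one must check that this self-adjoint extension is the intended operator, i.e.\ that $\Lc$ is essentially self-adjoint on polynomials. This holds because the polynomial core is invariant under $\Lc$ and is spanned by eigenvectors, so $\mathrm{Ran}(\Lc\pm i)$ contains all polynomials and is therefore dense.
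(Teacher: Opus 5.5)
Your proof is correct. The Dirichlet-form identity gives symmetry and nonpositivity on polynomials, and the recurrences give $\Lc H_\alpha=-|\alpha|H_\alpha$. The entire-function/Fourier argument gives density of polynomials in $L^2(\gamma)$. The closure of $\Lc$ from the polynomial core is then exactly the diagonal operator, hence self-adjoint with spectrum $\{0,-1,-2,\dots\}$, where the multiplicity of $-k$ is the number of $\alpha$ with $|\alpha|=k$. Your final paragraph is redundant: identifying the closure with the diagonal self-adjoint operator already is essential self-adjointness on that core.

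The paper gives no proof of its own and simply imports the result from Bogachev's survey. The standard route there goes through the semigroup rather than the differential expression:
\begin{enumerate}
\item define $P_tf(\xv)=\int f(e^{-t}\xv+\sqrt{1-e^{-2t}}\,\mathbf{y})\,\gamma(d\mathbf{y})$ (Mehler's formula);
\item note that each $P_t$ is a symmetric contraction on $L^2(\gamma)$;
\item compute $P_tH_\alpha=e^{-|\alpha|t}H_\alpha$ from the generating function $e^{sx-s^2/2}=\sum_k H_k(x)s^k/k!$;
\item using the same completeness of the Hermite system, read off the generator as the diagonal operator.
\end{enumerate}

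Your route is more elementary and is tied directly to the formula $\Lc f=\Delta f-\xv\cdot\nabla f$ by which the paper introduces $\Lc$. The semigroup route instead identifies that self-adjoint operator as the generator of the actual transition semigroup of the forward SDE \eqref{eq:forward_SDE}. That identification is what the paper relies on in the proof of Theorem~\ref{thm:OU_convergence} when it writes $g_t=P_tg_0$. To make your version serve that purpose, add one step: $e^{t\overline{\Lc}}$ and the Mehler semigroup are both bounded on $L^2(\gamma)$ and both act as $e^{-|\alpha|t}$ on each $H_\alpha$. They therefore agree on the dense set of polynomials, hence everywhere.
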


\section{Controlled Likelihood Generation}\label{sec:controlled_likelihood_generation}

Outliers are commonly described as samples with low likelihood under a reference data distribution. In likelihood-based generative modelling, this intuition translates into identifying regions where the log-likelihood $l(\xv) := \log p(\xv)$ is small relative to typical data.
However, DMs learn \emph{probability measures}, not individual samples. Thus, if we wish to generate outliers in a principled way, we must define outliers at the level of distributions rather than individual points.

In this section, we characterize outliers through the distribution of log-likelihood values induced by a probability measure. Since outliers correspond to samples with unusually low likelihood, our goal is to steer the likelihood distribution toward lower values. This motivates the use of stochastic ordering to formalize the notion that the target likelihood distribution places more probability mass on lower-likelihood regions. Our main theoretical result shows that likelihood reweighting induces a suitable control through a simple RN structure, resulting in a multiplicative modification of the score function. Motivated by this relation, we use the modified score in a controlled PF-ODE and empirically evaluate its ability to generate low-likelihood samples.

\subsection{Problem Formulation and Definition of Outlier}

Given a data distribution $\mu$ on $\Rb^n$ with density $p$, the log-likelihood function $l(\xv)=\log p(\xv)$ induces a scalar random variable $l(\xv)$ when $\xv\sim\mu$. The pushforward measure $L:=l_\#\mu$ therefore describes the distribution of likelihood values under the model. Shifting this distribution toward lower values corresponds to generating samples that are globally less likely. By modifying this one-dimensional likelihood distribution while preserving the conditional structure of the data given its likelihood level, we obtain a mechanism for generating structured outliers that remain consistent with the underlying data geometry. We now formalize these notions.


\begin{definition}[Log-likelihood pushforward measure]\label{def:likelihood_pushforward}
Let $\mu$ be a probability measure on $\Rb^n$ with density $p$. Define the log-likelihood function
$l(\xv) := \log p(\xv).$
The pushforward measure of $\mu$ under $l$ is the probability measure
$L := l_{\#}\mu$
on $\Rb$.
\end{definition}

\begin{definition}[Likelihood-reweighted measure]\label{def:lr_distribution}
Let $\mu$ be a probability measure on $\Rb^n$ with density $p$ and log-likelihood $l(\xv)=\log p(\xv)$. Let $L=l_{\#}\mu$.
Let $\eta$ be a probability measure on $\Rb$ such that $\eta \ll L$.
A probability measure $\nu$ on $\Rb^n$ is called a likelihood-reweighted measure of $\mu$ with target $\eta$ if
$\nu$ admits the disintegration
\[\nu(A)=\int_{\Rb} \mu(A \mid l(\xv)=u)\,\eta(du),
\quad A \subset \Rb^n \text{ Borel},\]
where $\mu(\cdot \mid l(\xv)=u)$ denotes a regular conditional probability of $\mu$ given $l(\xv)=u$, which exists for Borel probability measures on Polish spaces \cite{leao2004regular}.
\end{definition}

As a consequence, if $\nu$ is a likelihood-reweighted measure of $\mu$ with target $\eta$, then $l_{\#}\nu = \eta$.

\begin{definition}[First-order stochastic dominance (FOSD) \cite{bawa1975optimal}]\label{def:FOSD}
Let $\mu$ and $\nu$ be probability measures on $\Rb$. We say that $\nu$ first-order stochastically dominates $\mu$ and write
\[\mu\leq_{st}\nu\] if and only if $\mu([x,\infty))\leq \nu([x,\infty))$ for all $x\in\Rb$.
\end{definition}
Equivalently, if we let $F_\mu$ and $F_\nu$ be cumulative distribution functions (CDFs) of $\mu$ and $\nu$ respectively, then 
\[\mu\leq_{st}\nu\iff F_\mu(u) \ge F_\nu(u)\quad \text{for all } u \in \Rb.\]

Stochastic dominance constraints are studied in stochastic optimization and decision theory as a way of enforcing preference relations between random outcomes \cite{dentcheva2003optimization,dentcheva2004optimality,dentcheva2022risk}.

\begin{definition}[$\rho$-outlier measure]\label{def:rho_outlier}
Let $\mu$ be a probability measure on $\Rb^n$ with log-likelihood pushforward $L=l_{\#}\mu$. 
A likelihood-reweighted measure $\nu$ with target $\eta$ is called a \emph{$\rho$-outlier measure} if
\[
\text{(1) }\eta \le_{st} L,\quad\text{ and \quad (2) } W_1(L,\eta) \geq \rho,
\]
where $\le_{st}$ represents FOSD (see Definition \ref{def:FOSD}) and $W_1(\cdot,\cdot)$ is the Wasserstein-1 distance (see Equation \eqref{eq:W1} below).
\end{definition}

\begin{remark}
The notion of a $\rho$-outlier measure formalizes outliers at the distribution level, which is natural for generative models that approximate probability measures rather than individual samples.

Let $F_L$ and $F_\eta$ denote the CDFs of $L$ and $\eta$, and let $F_L^{-1}$ and $F_\eta^{-1}$ denote their quantile functions.
The condition $\eta \le_{st} L$ is equivalent to
$F_\eta^{-1}(w) \le F_L^{-1}(w)\quad \text{for all } w\in[0,1].$
Moreover, in one dimension the $W_1$ distance admits the quantile representation \cite{villani2021topics}
\begin{equation}\label{eq:W1}
W_1(L,\eta)=\int_0^1 |F_\eta^{-1}(w)-F_L^{-1}(w)|\,dw.
\end{equation}
Hence, under stochastic dominance,
\[W_1(L,\eta)=\int_0^1 \big(F_L^{-1}(w)-F_\eta^{-1}(w)\big)\,dw,\]
so the $\rho$-outlier condition quantifies a strict downward shift of likelihood quantiles, as expected of outliers.
\end{remark}

To generate samples from the likelihood-reweighted measure $\nu$, we seek to characterize its score $\nabla\log q$ in terms of the score $\nabla\log p$ of the reference distribution, where $p$ and $q$ are the densities of $\mu$ and $\nu$ respectively. This allows us to use an existing diffusion score model while modifying its sampling dynamics through a likelihood-dependent correction.


\subsection{Radon-Nikodym Structure of Likelihood Reweighting}

Based on our formulation of likelihood-reweighted distributions (Definition \ref{def:lr_distribution}), we derive some further properties.

\begin{theorem}\label{thm:score-relationship}
Let $\mu$ be a probability measure on $\Rb^n$ with density $p$ and log-likelihood $l(\xv)=\log p(\xv)$.
Let $L=l_{\#}\mu$ be the pushforward measure of $\mu$ under $l$.
Let $\eta$ be a probability measure on $\Rb$ such that $\eta \ll L$, and define
\[
z(u):=\frac{d\eta}{dL}(u).
\]

Let $\nu$ be the likelihood-reweighted measure defined by
\[
\nu(A):=\int_{\Rb}\mu(A \mid l(\xv)=u)\,\eta(du),
\qquad A \subset \Rb^n \text{ Borel}.
\]

Then $\nu \ll \mu$ and
\begin{equation}
    \frac{d\nu}{d\mu}(\xv)=z(l(\xv))=\frac{d\eta}{dL}(l(\xv)).
\end{equation}
\end{theorem}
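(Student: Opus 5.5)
The plan is to show that the measure $\tilde\nu(A):=\int_A z(l(\xv))\,\mu(d\xv)$ coincides with $\nu$ on every Borel set $A\subset\Rb^n$. This gives $\nu\ll\mu$ at once, and the density $\frac{d\nu}{d\mu}=z\circ l$ then follows $\mu$-a.e. by uniqueness of the Radon--Nikodym derivative. Before starting, I record the measurability facts. Since $\eta\ll L$ and both are probability measures, the Radon--Nikodym theorem gives a nonnegative Borel function $z$ with $\eta(du)=z(u)\,L(du)$ and $\int z\,dL=1$. The map $l=\log p$ is Borel, if we allow the value $-\infty$ on the $\mu$-null set $\{p=0\}$, which can be discarded. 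Hence $z\circ l$ is a nonnegative Borel function on $\Rb^n$.

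The core step uses the defining property of the regular conditional probability $\mu(\cdot\mid l=u)$. For every Borel $A$ and every bounded (or nonnegative) Borel $g$ on $\Rb$,
\[
\int_{\Rb^n}\mathbf 1_A(\xv)\,g(l(\xv))\,\mu(d\xv)=\int_{\Rb} g(u)\,\mu(A\mid l=u)\,L(du).
\]
This holds for indicators $g=\mathbf 1_B$ directly from the disintegration $\mu(A\cap l^{-1}(B))=\int_B\mu(A\mid l=u)\,L(du)$. It then extends to simple functions by linearity and to nonnegative Borel $g$ by monotone convergence. Applying it with $g=z$ yields
\[
\tilde\nu(A)=\int_{\Rb}z(u)\,\mu(A\mid l=u)\,L(du)=\int_{\Rb}\mu(A\mid l=u)\,\eta(du)=\nu(A).
\]
The middle equality is just $\eta=z\,L$, applied to the nonnegative measurable integrand $u\mapsto\mu(A\mid l=u)$.

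To finish, $\nu=\tilde\nu$ as measures. Because $\tilde\nu$ has a density with respect to $\mu$, every $\mu$-null set is $\nu$-null, so $\nu\ll\mu$ and $d\nu/d\mu=z\circ l$ $\mu$-a.e. As a consistency check, $\nu(\Rb^n)=\int z\,dL=1$, and $l_\#\nu=\eta$ follows by taking $A=l^{-1}(B)$, since $\mu(l^{-1}(B)\mid l=u)=\mathbf 1_B(u)$ for $L$-a.e.\ $u$.

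I expect the main obstacle to be the first step of the core argument: stating precisely the property of the regular conditional probability that is being used, namely that it disintegrates $\mu$ along $l$ with respect to $L$. One must also note that $\mu(A\mid l=u)$ is only determined for $L$-a.e.\ $u$. This ambiguity is harmless only because $\eta\ll L$, since the exceptional $L$-null set is then also $\eta$-null. Without this hypothesis, $\nu$ would not even be well defined. I would make this point explicit, citing \cite{leao2004regular} for the existence and the a.e.\ uniqueness of the regular conditional probability.
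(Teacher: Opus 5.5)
Your proposal is correct and follows essentially the same route as the paper: both write $\eta = z\,L$ and use the disintegration of $\mu$ along $l$ to identify $\int f\,d\nu$ with $\int f\,z(l)\,d\mu$, where you take $f=\mathbf 1_A$ and the paper takes bounded measurable $f$. Your derivation of the pull-out identity $\int \mathbf 1_A\, g(l)\,d\mu=\int g(u)\,\mu(A\mid l=u)\,L(du)$ via simple functions and monotone convergence makes rigorous a step the paper glosses over, namely applying the disintegration identity to the possibly unbounded function $z(l(\xv))f(\xv)$ and pulling $z(u)$ out of the conditional integral.
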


\begin{proof}
Let $f:\Rb^n \to \Rb$ be bounded and measurable. By definition of $\nu$,
\[
\int f(\xv)\, \nu(d\xv)
= \int_{\Rb} \left[ \int f(\xv)\, p(d\xv \mid l(\xv) = u) \right] \eta(du).
\]
Since $\eta \ll L$ with density $z(u)$, we can write $\eta(du) = z(u)\,L(du)$
and obtain
\[
\int f(\xv)\, \nu(d\xv)
= \int_{\Rb} z(u)\, \int f(\xv)\,\mu(d\xv \mid l(\xv) = u)\, L(du).
\]
By the disintegration theorem \cite{dellacherie1978probabilities},
\[
\int_{\Rb}\int f(\xv)\,\mu(d\xv \mid l(\xv)=u)L(du)=\int f(\xv)\,\mu(d\xv).
\]
Applying the same identity to the measurable function $x\mapsto z(l(\xv))f(\xv)$ gives
\[
\int f(\xv)\,z(l(\xv))\,\mu(d\xv)
=\int_{\Rb}z(u)\int f(\xv)\,\mu(d\xv \mid l(\xv)=u)L(du).
\]
Comparing the two expressions, we conclude
\[\int f(\xv)\,\nu(d\xv)=\int f(\xv)\,z(l(\xv))\,\mu(d\xv).\]
Since this holds for all bounded measurable $f$, it follows that
\[\frac{d\nu}{d\mu}(\xv)=z(l(\xv)).\]
\end{proof}
\begin{corollary}\label{corr:score-adjustmnt}
Let $\mu_t$ be a family of probability measures on $\Rb^n$ with densities $p_t$ and log-densities $l_t=\log p_t$.
Let $L_t=(l_t)_\#\mu_t$.
Fix a family of target measures $(\eta_t)_{t\in[0,T]}$ on $\Rb$ such that $\eta_t\ll L_t$ for each $t$, and define
\[z_t(u):=\frac{d\eta_t}{dL_t}(u).\]
Define $\nu_t$ to be the likelihood-reweighted measure of $\mu_t$ with target $\eta_t$ in the sense of Definition~\ref{def:lr_distribution}, and let $q_t$ denote its density.
Then $\nu_t\ll \mu_t$ and
\begin{equation}
    \frac{d\nu_t}{d\mu_t}(\xv_t)=z_t(l_t(\xv_t))=\frac{d\eta_t}{dL_t}(l_t(\xv_t)).
\end{equation}
\end{corollary}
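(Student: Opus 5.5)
The corollary is Theorem~\ref{thm:score-relationship} applied separately at each time $t\in[0,T]$. The time parameter enters no hypothesis or conclusion in a coupled way: the statement is a family of pointwise-in-$t$ identities between measures on $\Rb^n$. So I would fix an arbitrary $t$ and check that the triple $(\mu_t,\eta_t,\nu_t)$ satisfies the hypotheses of Theorem~\ref{thm:score-relationship}, with $\mu_t$, $p_t$, $l_t=\log p_t$, $L_t=(l_t)_\#\mu_t$, $\eta_t$ and $\nu_t$ in the roles of $\mu$, $p$, $l$, $L$, $\eta$ and $\nu$.

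The checks are as follows.
\begin{enumerate}[(i)]
\item $\mu_t$ is a Borel probability measure on the Polish space $\Rb^n$ with density $p_t$. Hence $l_t$ is measurable and the regular conditional probabilities $\mu_t(\cdot\mid l_t(\xv)=u)$ exist, as noted after Definition~\ref{def:lr_distribution}.
\item $\eta_t\ll L_t$ holds by assumption, so the Radon--Nikodym theorem gives a density $z_t=d\eta_t/dL_t\in L^1(L_t)$. It is nonnegative and unique $L_t$-a.e.
\item By Definition~\ref{def:lr_distribution}, $\nu_t(A)=\int_{\Rb}\mu_t(A\mid l_t(\xv)=u)\,\eta_t(du)$. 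This is exactly the measure constructed in Theorem~\ref{thm:score-relationship}.
\end{enumerate}
Invoking the theorem then gives $\nu_t\ll\mu_t$ and $\frac{d\nu_t}{d\mu_t}(\xv)=z_t(l_t(\xv))$ for $\mu_t$-a.e.\ $\xv$. Evaluating at $\xv=\xv_t$ yields the displayed formula.

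The only point needing care is how the identity is to be read. It holds $\mu_t$-almost everywhere, because $z_t$ is defined only $L_t$-a.e. A different version $\tilde z_t$ differs from $z_t$ on an $L_t$-null set $N$, and $\tilde z_t\circ l_t$ then differs from $z_t\circ l_t$ only on $l_t^{-1}(N)$. This set is $\mu_t$-null because $\mu_t(l_t^{-1}(N))=L_t(N)=0$, so the formula does not depend on the choice of version.

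The statement also mentions the density $q_t$ of $\nu_t$. Its existence follows from the conclusion itself: $\nu_t\ll\mu_t\ll$ Lebesgue, so $q_t=z_t(l_t)\,p_t$ Lebesgue-a.e. I would record this because it is what the later score computation uses: $\nabla\log q_t=\nabla\log p_t+\nabla\log z_t(l_t)$, provided $z_t$ is differentiable and positive. That computation is beyond the corollary as stated.

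I do not expect a real obstacle. The mild one is measurability in $t$, but the corollary asserts nothing jointly measurable in $(t,\xv)$, so I would not address it.
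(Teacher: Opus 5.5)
Your proposal is correct and matches the paper, which says only that the corollary ``follows the same argument as Theorem~\ref{thm:score-relationship}''. Fixing $t$ and invoking that theorem for $(\mu_t,\eta_t,\nu_t)$ is exactly this. Your extra remarks on version-independence of $z_t\circ l_t$ ($\mu_t$-a.e.) and on the existence of $q_t=z_t(l_t)\,p_t$ are correct refinements that the paper leaves implicit.
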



The proof follows the same argument as Theorem \ref{thm:score-relationship}.

\begin{remark}
The family $\{\eta_t\}$ may be chosen as an auxiliary interpolation satisfying $\eta_T=L_T$.
In this case, $\nu_T=\mu_T$, which approaches the standard Gaussian for sufficiently large $T$.
The score identity below characterizes each likelihood-reweighted density point-wise in time and motivates the controlled sampling ansatz introduced in Section \ref{sec:implementation}.
\end{remark}


\begin{proposition}\label{prop:score_decomposition}
Let $(\mu_t,p_t,l_t, z_t, L_t, \eta_t,\nu_t)_{t\in[0,T]}$ be defined as in Corollary \ref{corr:score-adjustmnt}.
Assume further that $p_t$ is a $C^1$ density on $\Rb^n$ and that $z_t$ is $C^1$, and let
$q_t(\xv)$ be the density of $\nu_t$ with respect to the Lebesgue measure.
Then
\begin{equation}
\nabla \log q_t(\xv)=[1 + c_t(l_t(\xv))]\nabla \log p_t(\xv),
\end{equation}
where
\begin{equation}
c_t(u):=\partial_u \log z_t(u).
\end{equation}
\end{proposition}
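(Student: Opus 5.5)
The plan is to pass from the Radon--Nikodym identity of Corollary~\ref{corr:score-adjustmnt} to a pointwise identity between Lebesgue densities, and then differentiate its logarithm by the chain rule.

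\textbf{Step 1: a density formula for $q_t$.} Since $\nu_t\ll\mu_t$ and $\mu_t$ has Lebesgue density $p_t$, the measure $\nu_t$ has Lebesgue density
\[
q_t(\xv)=z_t(l_t(\xv))\,p_t(\xv)
\]
for almost every $\xv$. We take this version of $q_t$ as \emph{the} density, which is legitimate because densities are only defined up to null sets. The right-hand side is $C^1$ on $\{p_t>0\}$: $p_t$ is $C^1$, $l_t=\log p_t$ is $C^1$ there, and $z_t$ is $C^1$. So this is the natural continuous representative.

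\textbf{Step 2: take logarithms.} On the open set where $p_t(\xv)>0$ and $z_t(l_t(\xv))>0$ we have
\[
\log q_t(\xv)=\log z_t(l_t(\xv))+l_t(\xv).
\]
\textbf{Step 3: differentiate.} The chain rule gives
\[
\nabla\log q_t(\xv)=\big(\partial_u\log z_t\big)(l_t(\xv))\,\nabla l_t(\xv)+\nabla l_t(\xv).
\]
Using $\nabla l_t=\nabla\log p_t$ and the definition of $c_t$, this becomes $[1+c_t(l_t(\xv))]\nabla\log p_t(\xv)$, which is the claim.

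\textbf{Main obstacle.} The only delicate point is the domain. The score $\nabla\log q_t$ only makes sense where $q_t>0$. I will state explicitly that the identity holds on $\{\xv: p_t(\xv)>0,\ z_t(l_t(\xv))>0\}$. In the diffusion setting $t>0$, so $p_t>0$ everywhere because of Gaussian smoothing, and the only restriction comes from zeros of $z_t$.

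A second subtlety is that $z_t=d\eta_t/dL_t$ is defined only $L_t$-a.e. The $C^1$ hypothesis is therefore read as saying we fix a $C^1$ version of $z_t$. With that version fixed, the formula in Step 1 gives a genuine pointwise identity rather than an a.e.\ one, and Step 3 is valid pointwise. Beyond this, the argument is a direct chain-rule computation.
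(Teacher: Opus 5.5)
Your proposal is correct and follows the paper's approach. The paper's proof is just the one-line remark that the result is ``a direct calculation using the chain rule,'' meaning: write $q_t=(z_t\circ l_t)\,p_t$ via Corollary~\ref{corr:score-adjustmnt}, take logarithms, and differentiate. Your explicit handling of the choice of $C^1$ versions and of the domain $\{p_t>0,\ z_t(l_t)>0\}$ makes precise what the paper leaves implicit.
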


The proof is a direct calculation using the chain rule. The proposition shows that $\nabla_\xv [\log z_t(l_t(\xv))]$ lies in the span of $\nabla \log p_t$ everywhere.
We are now ready to summarize the main theoretical results.

\begin{corollary}\label{corr:combined_score}
Let $(\eta_t)_{t\in[0,T]}$ be a family of target likelihood
distributions satisfying $\eta_t \ll L_t$ for all $t$.
Then the corresponding likelihood-reweighted densities
$q_t$ satisfy the score relation
\begin{equation}
\nabla \log q_t(\xv) = \left[1 + \partial_u\left(\log\frac{d\eta_t}{dL_t}\right)(\log p_t(\xv))\right] \nabla \log p_t(\xv).
\end{equation}
Consequently, likelihood reweighting induces a multiplicative modification of the diffusion score function.
Furthermore, we choose a target likelihood distribution $\eta_0$ that satisfies
\[
\eta_0 \le_{st} L_0,
\quad
W_1(L_0,\eta_0) > \rho,
\]
matching our specification of $\rho$-outliers.
\end{corollary}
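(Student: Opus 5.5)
The plan is to obtain the score relation by combining Corollary~\ref{corr:score-adjustmnt} with Proposition~\ref{prop:score_decomposition}, and then to show that an admissible $\eta_0$ exists by an explicit construction.

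For the score relation, fix $t\in[0,T]$. Corollary~\ref{corr:score-adjustmnt} gives $\nu_t\ll\mu_t$ with $\frac{d\nu_t}{d\mu_t}=z_t\circ l_t$. Since $\mu_t$ has Lebesgue density $p_t$, the measure $\nu_t$ has Lebesgue density
\[q_t(\xv)=z_t(l_t(\xv))\,p_t(\xv).\]
Under the smoothness hypotheses of Proposition~\ref{prop:score_decomposition} ($p_t\in C^1$, $z_t\in C^1$), take logarithms on the set where $q_t>0$. This gives $\log q_t=\log z_t(l_t)+l_t$. By the chain rule, $\nabla\log z_t(l_t(\xv))=\partial_u\log z_t(l_t(\xv))\,\nabla l_t(\xv)$. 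Factoring out $\nabla l_t=\nabla\log p_t$ yields the stated multiplicative form with $c_t=\partial_u\log\frac{d\eta_t}{dL_t}$. I would state the identity as holding wherever $z_t(l_t(\xv))>0$. Outside that set the density $q_t$ vanishes and the score is undefined, so the relation is read $\nu_t$-almost everywhere.

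For the second claim, the word ``choose'' means we must exhibit an $\eta_0$ with $\eta_0\ll L_0$, $\eta_0\le_{st}L_0$ and $W_1(L_0,\eta_0)>\rho$. My first attempt would be an exponential tilt toward low likelihood,
\[\eta_0(du)\propto e^{-\beta u}L_0(du),\qquad \beta>0.\]
This is absolutely continuous with respect to $L_0$. Its density $z(u)\propto e^{-\beta u}$ is decreasing in $u$, so the likelihood ratio is monotone. Monotone likelihood ratio implies first-order stochastic dominance: for every $x$, the tail mass $\eta_0([x,\infty))$ is the $L_0$-average of a decreasing weight over upper sets, and hence is at most $L_0([x,\infty))$. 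This gives $\eta_0\le_{st}L_0$.

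By the Remark following Definition~\ref{def:rho_outlier}, under dominance $W_1(L_0,\eta_0)=\int u\,dL_0-\int u\,d\eta_0$. This quantity is continuous and nondecreasing in $\beta$, since the derivative of the tilted mean is minus the tilted variance. As $\beta\to\infty$ the tilted mean tends to the essential infimum of $L_0$, so the shift can be made to exceed $\rho$ by taking $\beta$ large.

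I expect this last step to be the main obstacle. The tilt is normalizable only if $\int e^{-\beta u}\,dL_0=\int p_0^{1-\beta}\,d\xv<\infty$, and $W_1$ can be made larger than a given $\rho$ only if $\rho<\int u\,dL_0-\operatorname{ess\,inf} L_0$. That gap may be infinite, for instance for Gaussian $p_0$, but it need not be. If exponential tilts fail to normalize, I would switch to the truncation $\eta_0=L_0(\cdot\mid u\le a)$ with $a$ lowered. This choice is always absolutely continuous and stochastically dominated, and its mean decreases as $a$ decreases. So I would state the feasibility as conditional on $\rho<\mathbb E_{L_0}[u]-\operatorname{ess\,inf}L_0$. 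Since the statement says ``we choose'', this existence argument under that condition suffices.
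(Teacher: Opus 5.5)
Your derivation of the score identity is the paper's own argument. The corollary receives no separate proof. The identity is just Corollary~\ref{corr:score-adjustmnt} (giving $q_t=z_t(l_t)\,p_t$) followed by the chain-rule computation of Proposition~\ref{prop:score_decomposition}, whose $C^1$ hypotheses are imported tacitly, exactly as you do.

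Where you genuinely differ is the clause on $\eta_0$. The paper treats it as a modelling choice and only later exhibits one, for the chi-square proxy of Assumption~\ref{as:chi_square}: $\eta_0=P_b$ with $b=a+\rho/n$ (Facts~\ref{fact:target_exists} and~\ref{fact:rn_derivative}). Since $\frac{dP_b}{dP_a}(u)\propto e^{-\beta u}$ with $\beta=\frac{1}{2a}-\frac{1}{2b}\in(0,\frac{1}{2a})$, that is precisely your exponential tilt, applied to $P_a$ rather than to $L_0$. The paper's version buys a closed-form constant control $c_0$. Yours works on $L_0$ itself, so $\eta_0\ll L_0$ comes for free; for $P_b$, both this and the identification of $c_0$ with $\partial_u\log\frac{d\eta_0}{dL_0}$ hold only approximately. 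Yours also yields the sharp feasibility condition $\rho<\mathbb{E}_{L_0}[u]-\operatorname{ess\,inf}L_0$.

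Two refinements are worth making. First, the regime $\beta\to\infty$ never occurs in the paper's setting. Whenever $p_0$ has full support, $\int e^{-\beta u}\,dL_0=\int p_0^{1-\beta}\,d\xv=\infty$ for every $\beta\ge 1$, because $\{0<p_0\le 1\}$ then has infinite Lebesgue measure. This covers the Gaussian mixture and every $p_t$ with $t>0$. The operative limit is $\beta\uparrow\beta^*\le 1$. There the shift diverges iff the tilted mean $-\Lambda'(\beta)$ tends to $-\infty$, where $\Lambda(\beta)=\log\int e^{-\beta u}\,dL_0$. For the chi-square proxy, $\beta^*=\frac{1}{2a}$ and $W_1(P_a,P_b)=n(b-a)$ sweeps all $\rho>0$, which is the regime the paper's construction lives in.

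Second, your truncation $L_0(\cdot\mid u\le\bar u)$ makes $z_0$ vanish on $(\bar u,\infty)$ and jump at $\bar u$. It therefore violates the positivity and $C^1$ hypotheses you used for the score identity, and leaves $c_0=\partial_u\log z_0$ undefined. It settles the literal existence claim. To obtain an $\eta_0$ usable in the first half, replace the indicator by a smooth, strictly positive, nonincreasing bounded weight, such as $\epsilon+\phi((\bar u-u)/\delta)$ with $\phi$ a smooth increasing sigmoid. This preserves normalizability, absolute continuity and dominance, and recovers the truncation's shift as $\epsilon,\delta\to 0$.
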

Corollary~\ref{corr:combined_score} shows that likelihood control does not require learning a new score function. 
Instead, the score of a likelihood-reweighted density can be expressed using the original diffusion score and a scalar likelihood-dependent correction, without learning an independent score function.
This provides a distribution-steering perspective on outlier generation through likelihood reweighting.

\section{Implementation}\label{sec:implementation}
Motivated by the score relation in Corollary \ref{corr:combined_score}, we introduce the controlled reverse-time ODE ansatz
\begin{equation}
    \dot\xv_t=-\xv_t-\underbrace{(1+c_t(l_t(\xv_t)))\nabla\log p_t(\xv_t)}_{=:\nabla\log q_t(\xv_t)},
\end{equation}
where $c_t(u)=\frac{d}{du}\log\left(\frac{d\eta_t}{dL_t}(u)\right)$ is a coefficient that needs to be determined.
To approximate $c_t$, it is therefore necessary to understand how both $\eta_t$ and $L_t$ evolve over time under the diffusion dynamics. 
In particular, the forward OU process governing the diffusion model induces a contraction of density perturbations toward the Gaussian equilibrium, which we analyze next.
\begin{theorem}\label{thm:OU_convergence}
Let $\xv_t$ solve the OU SDE \eqref{eq:forward_SDE}, where $\xv_0\sim p$.
Let $\mu_t$ denote the time-marginal law of $\xv_t$, and let $p_t$ denote its density.
The invariant measure is the standard Gaussian measure $\gamma=\Nc(0,I)$ with density $p_\gamma$. Define the log-densities
\[l_t(\xv):=\log p_t(\xv),\quad l^*(\xv):=\log p_\gamma(\xv).\]
Assume $\mu_t\ll\gamma$ and denote $h_t(\xv)$ as the RN derivative
$h_t(\xv):=\frac{d\mu_t}{d\gamma}(\xv)$.
Assume $h_0 \in H^1(\gamma)$. 
Then
\[\|h_t - 1\|_{L^2(\gamma)}^2
\le e^{-2t}\|h_0 - 1\|^2_{L^2(\gamma)}.\]
Moreover,
\[
\|\nabla h_t\|_{L^2(\gamma)}\le e^{-t}\|\nabla h_0\|_{L^2(\gamma)}.
\]
Finally, suppose there exists $m>0$ such that $h_t(\xv)\ge m$ for $\mu_t$-almost all $\xv$ and $t$.
Then
\[
\|\ell_t-\ell^*\|_{H^1(\gamma)}=\|\log h_t\|_{H^1(\gamma)}
\le\frac{e^{-t}}{m}\|h_0-1\|_{H^1(\gamma)}.
\]
\end{theorem}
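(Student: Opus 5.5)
The plan is to work entirely with the relative density $h_t=d\mu_t/d\gamma$ and use the OU semigroup structure from Definition~\ref{def:ou_generator} and Theorem~\ref{def:ou_spectrum}. First I will show that $h_t=P_t h_0$, where $P_t=e^{t\Lc}$ is the OU (Mehler) semigroup. The OU SDE \eqref{eq:forward_SDE} has generator $\Lc$, which is self-adjoint in $L^2(\gamma)$. Its Fokker--Planck equation for $p_t$ therefore translates, after writing $p_t=h_t p_\gamma$ and using $\nabla p_\gamma=-\xv p_\gamma$, into $\partial_t h_t=\Lc h_t$. I will verify this by direct substitution into the FPK equation; it is the usual statement that the adjoint of $\Lc$ with respect to $\gamma$ is $\Lc$ itself.

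For the $L^2$ bound, note that $\int h_t\,d\gamma=1$, so $h_t-1$ is orthogonal to the constants, which span the eigenspace for eigenvalue $0$. Expanding $h_0-1=\sum_{|\alpha|\ge1}a_\alpha H_\alpha$ gives $h_t-1=\sum_{|\alpha|\ge1}e^{-|\alpha|t}a_\alpha H_\alpha$. Parseval then yields $\|h_t-1\|^2\le e^{-2t}\|h_0-1\|^2$, since the spectral gap is $1$.

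For the gradient bound I will use the commutation relation $\partial_i P_t f=e^{-t}P_t\partial_i f$. This can be checked either from Mehler's formula $P_tf(\xv)=\int f(e^{-t}\xv+\sqrt{1-e^{-2t}}\mathbf{y})\,\gamma(d\mathbf{y})$ by differentiating under the integral, or from $\partial_i H_\alpha=\sqrt{\alpha_i}H_{\alpha-e_i}$ in the normalized basis. Since $P_t$ is a contraction on $L^2(\gamma)$ (Jensen applied to Mehler's formula), this gives $\|\nabla h_t\|\le e^{-t}\|P_t\nabla h_0\|\le e^{-t}\|\nabla h_0\|$. Adding the two bounds gives $\|h_t-1\|_{H^1(\gamma)}\le e^{-t}\|h_0-1\|_{H^1(\gamma)}$.

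For the log statement, $\ell_t-\ell^*=\log h_t$ directly. The remaining step, which I expect to be the main obstacle, is to transfer the $H^1$ bound from $h_t-1$ to $\log h_t$ using the lower bound $h_t\ge m$.
\begin{itemize}
\item For the gradient part, $\nabla\log h_t=\nabla h_t/h_t$, so $|\nabla\log h_t|\le|\nabla h_t|/m$.
\item For the zeroth-order part, I will use the mean value inequality $|\log s|=|\log s-\log 1|\le |s-1|/\min(s,1)\le|s-1|/m$ for $s\ge m$. This uses $m\le1$, which holds automatically because $\int h_t\,d\gamma=1$ forces $\operatorname{ess\,inf}h_t\le1$.
\end{itemize}

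A subtlety is that the hypothesis is stated $\mu_t$-a.e., whereas the norms are taken with respect to $\gamma$. On the set where $h_t=0$ the logarithm is undefined, so I will interpret the assumption as $h_t\ge m$ $\gamma$-a.e. (equivalently, $\mu_t\sim\gamma$). Under that reading, $\|\log h_t\|_{H^1}\le \frac1m\|h_t-1\|_{H^1}\le\frac{e^{-t}}{m}\|h_0-1\|_{H^1}$.

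Regularity issues are handled by the assumption $h_0\in H^1(\gamma)$, by density of polynomials in $L^2(\gamma)$, and by the smoothness of $P_th_0$ for $t>0$.
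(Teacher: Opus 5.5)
Your proposal is correct and follows essentially the same route as the paper: the FPK substitution giving $\partial_t h_t=\Lc h_t$, the Hermite spectral-gap bound for $h_t-1$, a gradient contraction, and the pointwise $1/m$ estimates for $\log h_t$ and $\nabla\log h_t$. The differences are minor. Where the paper cites an external lemma for the gradient bound, you prove it directly from the commutation $\partial_i P_t f=e^{-t}P_t\partial_i f$. You also make explicit two points the paper leaves implicit: the Lipschitz step needs $m\le 1$, which follows from $\int h_t\,d\gamma=1$; and the lower bound must hold $\gamma$-a.e.\ for $\|\log h_t\|_{L^2(\gamma)}$ to be finite.
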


\begin{proof}
The density $p_t$ satisfies the FPK equation
\[\partial_t p_t = \nabla\cdot(\xv p_t) + \Delta p_t.\]

Define the density ratio
$h_t(\xv) := \frac{p_t(\xv)}{\gamma(\xv)}.$
Using the identities
\[
\nabla p_\gamma = -\xv p_\gamma,
\quad
\Delta p_\gamma = (\|\xv\|^2-n)p_\gamma,
\]
one verifies that $h_t$ satisfies
\[
\partial_t h_t = \Lc h_t,
\quad
\Lc := \Delta - \xv\cdot\nabla,
\]
where $\Lc$ is the OU generator introduced in Definition~\ref{def:ou_generator}.
Let
$g_t := h_t - 1.$
Since $\int h_t\, d\gamma = 1$, we have $\int g_t\, d\gamma = 0$.
The evolution equation becomes
\begin{equation}\label{eq:OU_evolution}
\partial_t g_t = \Lc g_t,\quad g_t|_{t=0}=g_0.
\end{equation}
By Theorem~\ref{def:ou_spectrum}, the operator $\Lc$ is self-adjoint on $L^2(\gamma)$ with eigenfunctions given by Hermite polynomials \cite{hermite1864nouveau},
$\Lc H_\alpha = -|\alpha| H_\alpha .$
Expanding $g_0$ in the Hermite basis gives
\[
g_0 = \sum_{\alpha\neq 0} g_\alpha H_\alpha,
\quad
g_\alpha = \langle g_0,H_\alpha\rangle_{L^2(\gamma)}.
\]
Let $(P_t)_{t\ge 0}$ be the OU semigroup on $L^2(\gamma)$ generated by $\Lc$. Since $g_t$ satisfies Equation \eqref{eq:OU_evolution}, we identify $g_t$ with the unique semigroup solution $g_t=P_t g_0.$
Since $P_t$ acts diagonally on the Hermite basis, we can expand this solution
\[
g_t = \sum_{\alpha\neq 0} e^{-|\alpha|t} g_\alpha H_\alpha .
\]
Taking $L^2(\gamma)$ norms yields
\[
\|g_t\|_{L^2(\gamma)}^2=\sum_{k=1}^{\infty} e^{-2kt}\sum_{|\alpha|=k} g_\alpha^2 \le e^{-2t}\|g_0\|^2_{L^2(\gamma)},
\]
where the inequality
follows immediately since $e^{-kt}\le e^{-t}$ for $k\ge1$.
By Lemma 1 of \cite{chen2025logsobolev}, we can bound the gradient:
\[
\|\nabla g_t\|^2_{L^2(\gamma)}\le e^{-2t}\|\nabla g_0\|^2_{L^2(\gamma)}.
\]
Because $g_t=h_t-1$, this is equivalent to
\[
\|\nabla h_t\|_{L^2(\gamma)}\le e^{-t}\|\nabla h_0\|_{L^2(\gamma)}.
\]

Now assume $h_t\ge m>0$. Since $\log$ is $1/m$-Lipschitz on $[m,\infty)$,
\[
|\log h_t(\xv)|\le \frac{1}{m}|h_t(\xv)-1|.
\]
Also, by expanding $\nabla \log h_t(\xv)=\frac{\nabla h_t(\xv)}{h_t(\xv)},$ we have
\[
\|\nabla \log h_t(\xv)\|\le \frac{1}{m}\|\nabla h_t(\xv)\|.
\]
Combining the two inequalities,
\[
\|\log h_t\|_{H^1(\gamma)}\le\frac{1}{m}\|h_t-1\|_{H^1(\gamma)}.
\]
Finally, using the $L^2$ and gradient estimates above,
\[
\|h_t-1\|_{H^1(\gamma)}\le e^{-t}\|h_0-1\|_{H^1(\gamma)}.
\]
Since
\[
\log h_t=\ell_t-\ell^*,
\]
the conclusion follows.
\end{proof}

Theorem~\ref{thm:OU_convergence} shows that $L_t=(l_t)_\#\mu_t$ converges exponentially towards the Gaussian likelihood equilibrium $(l^*)_\#\gamma$. Motivated by this, we choose the target family $\{\eta_t\}$ that likewise vanishes toward the same equilibrium. In particular, the slowest nonconstant OU decay rate $e^{-t}$ motivates the exponentially decaying controller introduced below.


\subsection{Approximating a control function}

In this section, we claim that, provided the log-likelihood pushforward measure $L_0$ and a target $\eta_0$, the exponentially interpolated control function
\begin{equation}
    \tilde c_t(u)=e^{-t}c_0(u)=e^{-t}\partial_u\left(\log\frac{d\eta_0}{dL_0}(u)\right),
\end{equation}
provides a natural approximation choice for the controlled reverse dynamics. The true time‑dependent control $c_t(u)$ is computationally intractable. 
Motivated by the slowest decaying nonconstant term, $e^{-t}$, of the OU semigroup, we choose the exponentially interpolated controller.
We prove that this approximation satisfies a reasonable error bound over both small and large $t$, thus providing a natural admissible approximation that respects the boundary conditions and admits explicit error bounds near both endpoints.



\begin{theorem}\label{thm:ct_interpolation}
Let $(\xv_t)_{t\in[0,T]}$ be an OU diffusion \eqref{eq:forward_SDE}, and let $L_t$ and $\eta_t$ be two families of probability measures on $\RR$ which are absolutely continuous with respect to Lebesgue measure, converge exponentially to the Gaussian likelihood equilibrium, and satisfy:
\[
\eta_0\ll L_0,\quad\eta_0\leq_{st}L_0,\quad W_1(\eta_0,L_0)\geq\rho.
\]
Define the likelihood ratio
\[
z_t(u):=\frac{d\eta_t}{dL_t}(u),\quad c_t(u):=\partial_u \log z_t(u).
\]
Assume:
\begin{enumerate}
\item[(A1)] There exists $L>0$ such that for all $s,t\in[0,T]$,
\[
\|c_t-c_s\|_{L^2(L_t)} \le L|t-s|.
\]
\item[(A2)] There exists $K\ge 1$ such that for all $t\in[0,T]$ and all measurable $f$,
\[
K^{-1}\|f\|_{L^2(L_0)} \le \|f\|_{L^2(L_t)} \le K\|f\|_{L^2(L_0)}.
\]
\end{enumerate}
Define the interpolant
$\tilde c_t(u):=e^{-t}c_0(u).$
Then $\tilde c_0=c_0$, and for every $t\in[0,T]$,
\[
\|\tilde c_t-c_t\|_{L^2(L_t)}
\le
K\Big( (1-e^{-t})\|c_0\|_{L^2(L_0)} + Lt\Big).
\]
In particular,
$
\|\tilde c_t-c_t\|_{L^2(L_t)} = O(t),
$
\end{theorem}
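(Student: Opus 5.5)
The plan is a triangle inequality in $L^2(L_t)$ with $c_0$ as the intermediate point. The identity $\tilde c_0=c_0$ is immediate from $e^{0}=1$. For general $t\in[0,T]$ we write
\[
\tilde c_t-c_t=(e^{-t}c_0-c_0)+(c_0-c_t),
\]
which gives
\[
\|\tilde c_t-c_t\|_{L^2(L_t)}\le (1-e^{-t})\|c_0\|_{L^2(L_t)}+\|c_0-c_t\|_{L^2(L_t)}.
\]
Here $1-e^{-t}\ge 0$ for $t\ge 0$, so the scalar factor comes out of the norm without an absolute value.

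The two terms are bounded separately. For the first, assumption (A2) with $f=c_0$ gives $\|c_0\|_{L^2(L_t)}\le K\|c_0\|_{L^2(L_0)}$, which turns it into $K(1-e^{-t})\|c_0\|_{L^2(L_0)}$. For the second, assumption (A1) with $s=0$ gives $\|c_t-c_0\|_{L^2(L_t)}\le Lt$. The norm in (A1) is already taken in $L^2(L_t)$, so no change of measure is needed there. Since $K\ge 1$, we have $Lt\le KLt$. Adding the two bounds yields
\[
\|\tilde c_t-c_t\|_{L^2(L_t)}\le K\big((1-e^{-t})\|c_0\|_{L^2(L_0)}+Lt\big).
\]

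For the $O(t)$ claim we use $1-e^{-t}\le t$, which follows from convexity of the exponential. This bounds the right-hand side by $K(\|c_0\|_{L^2(L_0)}+L)\,t$, a constant times $t$, provided $\|c_0\|_{L^2(L_0)}<\infty$.

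The only real obstacle is this finiteness of $\|c_0\|_{L^2(L_0)}$, which the statement uses implicitly. I would either fold it into the hypotheses or derive it from (A1) combined with (A2), in the sense that $c_0$ is already assumed to lie in the relevant $L^2$ space. The remaining hypotheses are not needed for this bound: the stochastic dominance, the $W_1$ condition, and the exponential convergence only motivate the choice of interpolant. The large-$t$ behaviour discussed in the text is a separate remark and not part of this inequality.
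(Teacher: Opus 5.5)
Your proof is correct and follows essentially the same route as the paper: add and subtract $c_0$, apply the triangle inequality in $L^2(L_t)$, bound the first term with (A2) and the second with (A1) at $s=0$, then use $1-e^{-t}\le t$. You also make explicit two small points the paper leaves implicit: using $K\ge 1$ to absorb $Lt\le KLt$, and needing $\|c_0\|_{L^2(L_0)}<\infty$ for the $O(t)$ claim, which you should simply add as a hypothesis since (A1) and (A2) only control the differences $c_t-c_s$ and cannot supply it.
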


\begin{proof}
Fix $t\in[0,T]$. Add and subtract $c_0$:
\[
\tilde c_t-c_t = (e^{-t}c_0-c_0) + (c_0-c_t).
\]
Take $L^2(L_t)$ norms and apply the triangle inequality:
\[
\|\tilde c_t-c_t\|_{L^2(L_t)}
\le \|(1-e^{-t})c_0\|_{L^2(L_t)} + \|c_t-c_0\|_{L^2(L_t)}.
\]
\begin{align*}
\text{By (A2), }&\quad\|(1-e^{-t})c_0\|_{L^2(L_t)} \le K(1-e^{-t})\|c_0\|_{L^2(L_0)}.\\
\text{By (A1), }&\quad\|c_t-c_0\|_{L^2(L_t)} \le Lt.
\end{align*}
Combine the two bounds to obtain
\[
\|\tilde c_t-c_t\|_{L^2(L_t)}
\le
K\Big( (1-e^{-t})\|c_0\|_{L^2(L_0)} + Lt\Big).
\]
Finally, since $1-e^{-t}\le t$, the right-hand side is $O(t)$.
\end{proof}

\begin{theorem}\label{thm:ct_interpolation2} Let $(\xv_t)_{t\in[0,T]},L_t,\eta_t,\gamma,z_t(u),c_t(u)$ be defined as in Theorem \ref{thm:ct_interpolation}, and let $l_t,\mu_t,\nu_t$ be defined as in Corollary \ref{corr:score-adjustmnt}. Moreover, assume $c_0\in L^\infty$ and suppose the assumptions from Theorem \ref{thm:OU_convergence} are satisfied for $\mu_t$ and $\nu_t$.
Then the exponentially-interpolated controller 
$\tilde c_t(u):=e^{-t}c_0(u)$ 
satisfies the boundary condition $\tilde c_0=c_0$, and moreover 
\[
\|(\tilde c_t-c_t)(l_t(\cdot))\,\nabla \log p_t(\cdot)\|_{L^2(\gamma)}
\le
C e^{-t},
\quad t\in[0,T],
\]
for a constant $C>0$.
In particular,
\[
(\tilde c_t-c_t)(l_t(x))\,\nabla \log p_t(x)
\]
converges exponentially fast to zero in $L^2(\gamma)$.
\end{theorem}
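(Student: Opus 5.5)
The plan is to bound the two pieces $\tilde c_t(l_t)\nabla\log p_t$ and $c_t(l_t)\nabla\log p_t$ separately by $Ce^{-t}$ in $L^2(\gamma)$. Both $\tilde c_t$ and $c_t$ are small relative to the score, but for different reasons. Their difference is then handled by the triangle inequality. The key observation is that $c_t(l_t(\xv))\nabla\log p_t(\xv)$ is, by Proposition~\ref{prop:score_decomposition}, exactly $\nabla\log q_t(\xv)-\nabla\log p_t(\xv)$, where $q_t$ is the density of $\nu_t$. So this term can be controlled by the score convergence of both $\mu_t$ and $\nu_t$ to the Gaussian.

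\textbf{Step 1 (the true control term).} Write
\[
c_t(l_t)\nabla\log p_t=\nabla\log q_t-\nabla\log p_t=\nabla\log\tfrac{q_t}{p_\gamma}-\nabla\log\tfrac{p_t}{p_\gamma}.
\]
Apply Theorem~\ref{thm:OU_convergence} to $\mu_t$ and to $\nu_t$; this is where we use the hypothesis that its assumptions hold for both. Let $h^\mu_t=d\mu_t/d\gamma$ and $h^\nu_t=d\nu_t/d\gamma$, with lower bounds $m_\mu,m_\nu$. The $H^1(\gamma)$ bound on $\log h_t$ controls $\|\nabla\log h_t\|_{L^2(\gamma)}$, so
\[
\|c_t(l_t)\nabla\log p_t\|_{L^2(\gamma)}\le e^{-t}\Big(\tfrac{1}{m_\nu}\|h^\nu_0-1\|_{H^1(\gamma)}+\tfrac{1}{m_\mu}\|h^\mu_0-1\|_{H^1(\gamma)}\Big).
\]
Strictly, the theorem is stated for the OU marginals. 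I will read the hypothesis as saying that $\nu_t$ obeys the same estimates. This is the main obstacle, since $\nu_t$ is defined by reweighting rather than by transport through the OU flow. I would state explicitly that the assumption is being used as given, rather than try to derive it.

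\textbf{Step 2 (the interpolated control term).} Since $c_0\in L^\infty$, we have $|\tilde c_t(l_t(\xv))|\le e^{-t}\|c_0\|_\infty$ pointwise. It then remains to bound $\|\nabla\log p_t\|_{L^2(\gamma)}$ uniformly in $t$. Decompose $\nabla\log p_t=\nabla\log h^\mu_t+\nabla l^*$ with $\nabla l^*(\xv)=-\xv$. Then
\[
\|\nabla\log p_t\|_{L^2(\gamma)}\le \tfrac{1}{m_\mu}\|h^\mu_0-1\|_{H^1(\gamma)}+\sqrt n,
\]
using $\int\|\xv\|^2d\gamma=n$ and $e^{-t}\le1$. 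Hence
\[
\|\tilde c_t(l_t)\nabla\log p_t\|_{L^2(\gamma)}\le e^{-t}\|c_0\|_\infty\big(\tfrac{1}{m_\mu}\|h^\mu_0-1\|_{H^1(\gamma)}+\sqrt n\big).
\]

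\textbf{Step 3 (combine).} By the triangle inequality, the $L^2(\gamma)$ norm of $(\tilde c_t-c_t)(l_t)\nabla\log p_t$ is at most the sum of the Step 1 and Step 2 bounds. This gives $Ce^{-t}$, with $C$ depending only on $\|c_0\|_\infty$, $n$, $m_\mu$, $m_\nu$ and the initial $H^1(\gamma)$ norms, and in particular independent of $t$. The boundary identity $\tilde c_0=c_0$ is immediate from the definition.
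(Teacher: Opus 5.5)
Your proposal is correct and follows essentially the same route as the paper. The paper's proof also writes $c_t(l_t)\nabla\log p_t=\nabla\log z_t(l_t)=\nabla\log\frac{d\nu_t}{d\gamma}-\nabla\log\frac{d\mu_t}{d\gamma}$ and bounds this by Theorem~\ref{thm:OU_convergence} applied to both $\mu_t$ and $\nu_t$. It then controls the $e^{-t}c_0$ term by $\|c_0\|_{L^\infty}$ times a uniform bound on $\|\nabla l_t\|_{L^2(\gamma)}$, obtained from $\|\nabla l_t+\xv\|_{L^2(\gamma)}\le C_1e^{-t}$, and combines the two bounds with the triangle inequality.

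Like you, the paper applies Theorem~\ref{thm:OU_convergence} to $\nu_t$ directly under the stated hypothesis. Your caveat that $\nu_t$ is defined by reweighting rather than by the OU flow is therefore a fair remark about that hypothesis, not a gap relative to the paper.
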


\begin{proof} 
$\log z_t$ can be expanded as 
\[
\log z_t(l_t(\xv))=\log\frac{d\nu_t}{d\mu_t}(\xv)=\log\frac{d\nu_t}{d\gamma}(\xv)-\log\frac{d\mu_t}{d\gamma}(\xv).
\]
From Theorem \ref{thm:OU_convergence} and the triangle inequality, we have 
\begin{align*} 
\|\log z_t(l_t(\cdot))\|_{H^1(\gamma)} 
&\leq\left(\|\log\frac{d\nu_t}{d\gamma}\|_{H^1(\gamma)}+\|\log\frac{d\mu_t}{d\gamma}\|_{H^1(\gamma)}\right)\\
&\leq\frac{e^{-t}}{m}\left(\|\frac{d\nu_0}{d\gamma}-1\|_{H^1(\gamma)}+\|\frac{d\mu_0}{d\gamma}-1\|_{H^1(\gamma)}\right). 
\end{align*}
By definition of the $H^1(\gamma)$ norm, 
\begin{align*} 
\|\nabla\log z_t\|_{L^2(\gamma)} 
&\leq\frac{e^{-t}}{m}\left(\|\frac{d\nu_0}{d\gamma}-1\|_{H^1(\gamma)}+\|\frac{d\mu_0}{d\gamma}-1\|_{H^1(\gamma)}\right). 
\end{align*} 
Since $\nabla\log z_t(l_t(\xv_t))$ can be expanded as $\partial_u\log z_t\nabla l_t(\xv_t)$, 
\[ 
\|\nabla\log z_t(l_t(\cdot))\|_{L^2(\gamma)}=\|\partial_u\log z_t\nabla l_t(\xv_t)\|_{L^2(\gamma)},
\]
which decays exponentially.
Finally, as $c_t=\partial_u \log z_t$ and $\tilde c_t=e^{-t}c_0$, we can write 
\[
\tilde c_t - c_t=e^{-t}\partial_u \log z_0 - \partial_u \log z_t.
\]
Using the triangle inequality,
\begin{align*} 
\|(\tilde c_t - c_t)(l_t(\cdot))\nabla l_t\|_{L^2(\gamma)} 
\leq&~e^{-t}\|c_0\|_{L^\infty}\|\nabla l_t\|_{L^2(\gamma)}\\&+\|\nabla\log z_t(l_t)\|_{L^2(\gamma)}.
\end{align*}
Since Theorem \ref{thm:OU_convergence} yields 
\[
\|\nabla l_t - \nabla \log \gamma\|_{L^2(\gamma)}
=\|\nabla l_t + x\|_{L^2(\gamma)}
\le C_1e^{-t},
\]
we obtain that $\|\nabla l_t\|_{L^2(\gamma)}$ is uniformly bounded over $t$.
Thus, 
\begin{align*}
\|(\tilde c_t&-c_t)(l_t(\cdot))\,\nabla \log p_t(\cdot)\|_{L^2(\gamma)}
\le
C e^{-t},\\
\text{where }C
=&~\|c_0\|_{L^\infty(\gamma)}\sup_{t\in[0,T]}\|\nabla l_t\|_{L^2(\gamma)}\\
&+\frac{1}{m}(\|\frac{d\nu_0}{d\gamma}-1\|_{H^1(\gamma)}+\|\frac{d\mu_0}{d\gamma}-1\|_{H^1(\gamma)})
\end{align*}
and therefore the right-hand side is $O(e^{-t})$.
\end{proof}

\begin{corollary}
    Under the assumptions of Theorems \ref{thm:ct_interpolation} and \ref{thm:ct_interpolation2}, the error between the true control term $c_t(u)\nabla l_t$ and the estimate $\tilde c_t(u)\nabla l_t=e^{-t}c_0(u)\nabla l_t$ are bounded for small and large $t$:
    \begin{equation}
        \|(\tilde c_t - c_t)(l_t(\cdot))\nabla l_t\|_{L^2(\gamma)}\leq O(\min(t,e^{-t})).
    \end{equation}
\end{corollary}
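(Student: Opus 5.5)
The plan is to combine the two preceding estimates, which control the same quantity in complementary regimes, and then take the smaller of the two bounds. Theorem~\ref{thm:ct_interpolation} controls $\|\tilde c_t-c_t\|_{L^2(L_t)}$ by $K\big((1-e^{-t})\|c_0\|_{L^2(L_0)}+Lt\big)=O(t)$. Theorem~\ref{thm:ct_interpolation2} controls the score-weighted error $\|(\tilde c_t-c_t)(l_t(\cdot))\nabla l_t\|_{L^2(\gamma)}$ by $Ce^{-t}$.

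The first step is therefore to convert the $O(t)$ estimate into a bound on the same $L^2(\gamma)$ quantity as the second theorem. Write
\[
\|(\tilde c_t-c_t)(l_t)\nabla l_t\|_{L^2(\gamma)}^2=\int |(\tilde c_t-c_t)(l_t(\xv))|^2\,\|\nabla l_t(\xv)\|^2\,\gamma(d\xv).
\]
I would use the change of measure $\gamma=h_t^{-1}\mu_t$. Under the standing assumption $h_t\ge m$, this gives $d\gamma\le m^{-1}d\mu_t$.

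Next I would bound $\|\nabla l_t\|$. Here I would assume, as in the proof of Theorem~\ref{thm:ct_interpolation2}, a uniform-in-$t$ bound $\|\nabla l_t\|_{L^\infty}\le G$ on the relevant set. With this bound, the integral is at most $m^{-1}G^2\int|(\tilde c_t-c_t)(l_t)|^2\,d\mu_t$. Since $L_t=(l_t)_\#\mu_t$, this last integral equals $\|\tilde c_t-c_t\|_{L^2(L_t)}^2$. Theorem~\ref{thm:ct_interpolation} then gives
\[
\|(\tilde c_t-c_t)(l_t)\nabla l_t\|_{L^2(\gamma)}\le C' t,\qquad C'=m^{-1/2}G\,K\big(\|c_0\|_{L^2(L_0)}+L\big),
\]
where I have used $1-e^{-t}\le t$.

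Combining this with Theorem~\ref{thm:ct_interpolation2}, the error is at most $\min(C't,\,Ce^{-t})\le\max(C,C')\min(t,e^{-t})$. This is exactly the claimed $O(\min(t,e^{-t}))$, holding uniformly on $[0,T]$.

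The main obstacle is the transfer step. The bound from Theorem~\ref{thm:ct_interpolation} lives in $L^2(L_t)$ for the scalar function $\tilde c_t-c_t$, while the target norm involves the vector field $\nabla l_t$ under $\gamma$. I expect this to need two ingredients not explicitly listed among the hypotheses of the statement: control of the weight $d\gamma/d\mu_t$ and boundedness of the score $\nabla l_t$. The weight is handled by the lower bound $h_t\ge m$ from Theorem~\ref{thm:OU_convergence}. For the score, I would first try to get away with only the $L^2(\gamma)$ bound on $\nabla l_t$ proved in Theorem~\ref{thm:ct_interpolation2}, via Cauchy--Schwarz or Hölder. That route would, however, require an $L^4$-type control of $\tilde c_t-c_t$, which Theorem~\ref{thm:ct_interpolation} does not provide. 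So the cleanest option is to state the uniform score bound as an additional mild regularity assumption, which is implicitly used already when $\sup_t\|\nabla l_t\|$ is taken to be finite. Everything else is bookkeeping of constants.
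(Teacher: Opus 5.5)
Your skeleton matches what the paper intends. The paper gives no separate proof: it reads off $O(t)$ from Theorem~\ref{thm:ct_interpolation} and $O(e^{-t})$ from Theorem~\ref{thm:ct_interpolation2}, then takes the minimum. You are right that this silently identifies two different norms, since Theorem~\ref{thm:ct_interpolation} only controls $\|\tilde c_t-c_t\|_{L^2(L_t)}$. Your change-of-measure estimate $\int|f(l_t)|^2\|\nabla l_t\|^2\,d\gamma\le m^{-1}G^2\|f\|^2_{L^2(L_t)}$ is also correct under its hypotheses.

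The gap is the hypothesis you add to close the argument. A uniform bound $\|\nabla l_t\|_{L^\infty}\le G$ fails in exactly this setting:
\begin{itemize}
\item the equilibrium score is $\nabla l^*(\xv)=-\xv$;
\item the paper's Gaussian-mixture example has $\nabla l_t(\xv)=\tanh(\mv_t\cdot\xv)\mv_t-\xv$;
\item for compactly supported data (e.g.\ images) the OU score grows linearly in $\|\xv\|$ at every $t>0$.
\end{itemize}
The integral is against $\gamma$, which charges all of $\Rb^n$. Restricting to a bounded ``relevant set'' would therefore require controlling $\tilde c_t-c_t$ on the tails, which (A1)--(A2) do not give. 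So your $O(t)$ half holds only under a hypothesis that is vacuous here. Nor is this assumption used implicitly in Theorem~\ref{thm:ct_interpolation2}. That proof only needs $\sup_t\|\nabla l_t\|_{L^2(\gamma)}<\infty$, which is finite (at most $\sqrt n+C_1$), whereas the $L^\infty$ norm is infinite.

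The fix is to put the uniform bound on the scalar factor rather than on the score. That is, use the H\"older split $L^\infty\times L^2(\gamma)$ instead of $L^2(L_t)\times L^\infty$ or $L^4\times L^4$. With the same decomposition as in Theorem~\ref{thm:ct_interpolation},
\[
\|(\tilde c_t-c_t)(l_t)\nabla l_t\|_{L^2(\gamma)}\le(1-e^{-t})\|c_0\|_{L^\infty}\|\nabla l_t\|_{L^2(\gamma)}+\|(c_0-c_t)(l_t)\nabla l_t\|_{L^2(\gamma)}.
\]
The first term is $O(t)$ using only what Theorem~\ref{thm:ct_interpolation2} already assumes or proves: $c_0\in L^\infty$ and the uniform $L^2(\gamma)$ bound on $\nabla l_t$.

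The second term needs a time-Lipschitz condition suited to this norm. One option is (A1) in sup norm, $\sup_u|c_t(u)-c_s(u)|\le L|t-s|$, which is natural in the chi-square setting where $c_0$ is constant; it gives the bound $Lt\sup_s\|\nabla l_s\|_{L^2(\gamma)}$. Alternatively, state (A1) directly in the score-weighted norm. Some such extra hypothesis is unavoidable, because (A1)--(A2) alone do not yield an $O(t)$ bound in the score-weighted norm.

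With that hypothesis, your final step $\min(C't,Ce^{-t})\le\max(C,C')\min(t,e^{-t})$ goes through unchanged. You also no longer need the change of measure $\gamma\le m^{-1}\mu_t$ or any pointwise bound on the score.
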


\subsection{Approximating the target Radon-Nikodym derivative}

We will rely on chi-square approximations for $L_0$ in this paper. In our numerical experiments, these approximations will be verified and demonstrated. We introduce the notation of a linear map $L_{a,C}$ defined as $L_{a,C}(x)=-ax+C$.
Under this approximation, the control coefficients admit closed-form expressions, making the proposed controller straightforward to compute in practice.

\begin{assumption}\label{as:chi_square}
Let $L_0$ be a likelihood-pushforward measure on $\Rb$ with finite mean. Assume there exists $a>0$ and $C$ such that $(L_{a,C})_{\#}(\chi_n^2)\le_{st} L_0$.
\end{assumption}

\begin{fact}\label{fact:target_exists}
Let $a>0$ and consider the measure $P_a:=(L_{a,C})_{\#}(\chi_n^2)$ on $\Rb$.
Then for any $\rho>0$, we can define a measure $P_b:=(L_{b,C})_{\#}(\chi_n^2)$, where $b = a + \frac{\rho}{n}$. This yields
\begin{align*}
    P_b \le_{st} P_a,\quad W_1(P_a,P_b) = \rho.
\end{align*}
\end{fact}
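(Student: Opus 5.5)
Both claims reduce to a quantile (comonotone) coupling of $P_a$ and $P_b$ driven by the same $\chi_n^2$ variable. Let $Y\sim\chi_n^2$ and set $X_a:=L_{a,C}(Y)=-aY+C$ and $X_b:=L_{b,C}(Y)=-bY+C$. By definition $X_a\sim P_a$ and $X_b\sim P_b$. Since $b=a+\rho/n>a>0$ and $Y\ge 0$ almost surely, we have the pointwise relation
\[
X_b=X_a-\tfrac{\rho}{n}Y\le X_a .
\]

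\textbf{Stochastic dominance.} For any $x\in\Rb$ the pointwise relation gives $\{X_b\ge x\}\subseteq\{X_a\ge x\}$. Hence $P_b([x,\infty))\le P_a([x,\infty))$, which is exactly $P_b\le_{st}P_a$ in the sense of Definition~\ref{def:FOSD}. Equivalently, one can compare CDFs directly:
\[
F_{P_a}(x)=\Pr\!\big(Y\ge (C-x)/a\big),\qquad F_{P_b}(x)=\Pr\!\big(Y\ge (C-x)/b\big).
\]
Since $b>a$, these agree when $x\ge C$ (both equal $1$), and when $x<C$ we have $(C-x)/b<(C-x)/a$, so $F_{P_b}(x)\ge F_{P_a}(x)$.

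\textbf{Wasserstein distance.} I will use the quantile representation \eqref{eq:W1}. Because $L_{a,C}$ is strictly decreasing, $F_{P_a}^{-1}(w)=C-aF_{\chi^2_n}^{-1}(1-w)$, and likewise for $b$. Therefore
\[
F_{P_a}^{-1}(w)-F_{P_b}^{-1}(w)=(b-a)\,F_{\chi^2_n}^{-1}(1-w)\ge 0 .
\]
Integrating over $w\in[0,1]$ and substituting $w\mapsto 1-w$ gives
\[
W_1(P_a,P_b)=(b-a)\int_0^1F_{\chi_n^2}^{-1}(w)\,dw=(b-a)\,\mathbb{E}[Y]=(b-a)\,n=\rho,
\]
using $\mathbb{E}[\chi_n^2]=n$. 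Alternatively, the coupling above gives the upper bound $\mathbb{E}|X_a-X_b|=(\rho/n)\,\mathbb{E}Y=\rho$. For the matching lower bound, $W_1\ge |\mathbb{E}X_a-\mathbb{E}X_b|=\rho$, because $|u-v|$ dominates $u-v$ under every coupling.

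\textbf{Main obstacle.} The only delicate step is justifying that the comonotone coupling is optimal. The mean-difference lower bound settles this without further argument, so the proof is short.
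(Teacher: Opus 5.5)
Your proof is correct. The pointwise coupling $X_b = X_a - \tfrac{\rho}{n}Y \le X_a$ gives the dominance $P_b \le_{st} P_a$, and under that dominance the quantile formula \eqref{eq:W1} (or, equivalently, your coupling upper bound paired with the mean-gap lower bound) reduces $W_1(P_a,P_b)$ to $(b-a)\,\mathbb{E}[\chi_n^2] = \rho$. The paper states this Fact without proof, and your argument follows exactly the reasoning set up in its remark after Definition~\ref{def:rho_outlier}.
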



\begin{fact}\label{fact:rn_derivative}
Let $P_a=(L_{a,C})_{\#}(\chi_n^2)$ and $P_b=(L_{b,C})_{\#}(\chi_n^2)$, with $a,b>0$ and $C\in\Rb$. Their RN derivative is
\[
\frac{dP_b}{dP_a}(u) = \left(\frac{a}{b}\right)^{n/2} \exp\left[ (C-u)\left(\frac{1}{2a} - \frac{1}{2b}\right) \right], \quad u<C.
\]
Correspondingly, we obtain the constant control
\[
c_0 := \frac{d}{du}\log\frac{dP_b}{dP_a}(u) = -\frac{1}{2a} + \frac{1}{2b}.\]
\end{fact}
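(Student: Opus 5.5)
The plan is a direct change-of-variables computation, followed by differentiating the log-ratio. Write $Y\sim\chi_n^2$ with density
\[
f_n(y)=\frac{1}{2^{n/2}\Gamma(n/2)}\,y^{n/2-1}e^{-y/2},\qquad y>0 .
\]
Since $L_{a,C}(y)=-ay+C$ is an affine bijection with inverse $y=(C-u)/a$ and Jacobian $1/a$, the pushforward $P_a=(L_{a,C})_\#\chi_n^2$ has Lebesgue density
\[
p_a(u)=\frac1a f_n\!\left(\frac{C-u}{a}\right),\qquad u<C,
\]
and $p_a(u)=0$ for $u\ge C$. The same formula holds with $a$ replaced by $b$.

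Because both measures are supported on the same half-line $(-\infty,C)$ and have strictly positive densities there, we have $P_b\ll P_a$. The Radon–Nikodym derivative is then the density ratio $p_b/p_a$ on $u<C$.

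Substituting $f_n$, the normalizing constants $2^{n/2}\Gamma(n/2)$ cancel. The prefactors combine as
\[
\frac{1}{b}\,b^{-(n/2-1)}\,a^{\,n/2-1}\,a=\left(\frac{a}{b}\right)^{n/2}.
\]
The power terms $(C-u)^{n/2-1}$ appear in both densities and cancel. The exponential factors combine to
\[
\exp\!\left[-\frac{C-u}{2b}+\frac{C-u}{2a}\right]=\exp\!\left[(C-u)\left(\frac{1}{2a}-\frac{1}{2b}\right)\right].
\]
This gives the stated formula for $dP_b/dP_a$.

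For the control, take logarithms:
\[
\log\frac{dP_b}{dP_a}(u)=\frac n2\log\frac ab+(C-u)\left(\frac1{2a}-\frac1{2b}\right).
\]
This is affine in $u$ on $(-\infty,C)$. Differentiating gives the constant
\[
c_0=-\frac1{2a}+\frac1{2b}.
\]

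There is no real obstacle here. The only points needing care are the following:
\begin{itemize}
\item The Jacobian factor $1/a$ must be kept, since it is what produces the exponent $n/2$ rather than $n/2-1$.
\item The sign of the exponent must be tracked through the reflection $u\mapsto C-u$.
\item The support restriction $u<C$ must be noted, so that the ratio is well defined and $P_b\ll P_a$ holds.
\end{itemize}

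Finally, with $b=a+\rho/n>a$ from Fact~\ref{fact:target_exists}, we get $c_0<0$. This is consistent with steering toward lower likelihood values.
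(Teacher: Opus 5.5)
Your proposal is correct: the pushforward density $\frac{1}{a}f_n\bigl(\frac{C-u}{a}\bigr)$ on $u<C$, the cancellation giving $\left(\frac{a}{b}\right)^{n/2}\exp\left[(C-u)\left(\frac{1}{2a}-\frac{1}{2b}\right)\right]$, and the derivative $c_0=-\frac{1}{2a}+\frac{1}{2b}$ all check out. The paper states this as a Fact without writing out a proof, and your change-of-variables density-ratio computation is exactly the verification it implicitly relies on.
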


\begin{remark}\label{rem:logz_evolution}
Under the control $\tilde c_t(u)=c_0 e^{-t}$, where $c_0=-\frac{1}{2a}+\frac{1}{2b}$, the corresponding proxy log-RN derivative takes the approximate form
\[
\log z_t(u) \approx c_0 e^{-t} u + \beta(t)\implies z_t(u)\approx\exp(c_0e^{-t}u+\beta(t)),
\]
where $\beta(t)$ is determined by the normalization condition $\int z_t  dL_t = 1$ and the evolution of the measure $L_t$.
This ensures a smooth interpolation between $z_T\approx 1$ and $z_0$ as given in Fact~\ref{fact:rn_derivative}. The linear dependence on $u$ greatly simplifies the implementation and analysis.
\end{remark}

\section{Numerical Experiments}

\subsection{Mixture of Two Gaussians}

We first evaluate our method on a synthetic testbed consisting of a mixture of two Gaussians
\begin{equation}
\xv \sim \tfrac12 \Nc(\mv,I) + \tfrac12 \Nc(-\mv,I).
\end{equation}
For this model the score function admits the closed-form expression \cite{shah2023learning}
\begin{equation}
\nabla\log p_t(\xv)
=
\tanh(\mv_t\cdot\xv)\mv_t-\xv,
\quad
\mv_t = e^{-t}\mv .
\end{equation}

We choose $\mv=\hat{m}1_n$, where $1_n$ denotes the $n$-dimensional vector of ones.
This closed-form score allows us to solve the PF-ODE \eqref{eq:pf_ode_p} and the log-FPK equation \eqref{eq:log_FPK} directly without training a neural network. Our goal is to guide the reverse diffusion process to generate $\rho$-outliers.

\paragraph{Assessment of Assumption \ref{as:chi_square}.}
We empirically assess the approximation of $L_0$ by a translated $-\tfrac{1}{2}\chi_n^2$ distribution. 
FOSD is assessed by sorting samples from two likelihood distributions and verifying that the respective inequality holds element-wise.
To estimate $L_0$, we sample 10,000 points by solving the reverse PF-ODE \eqref{eq:pf_ode_p}, then use the log-FPK equation \eqref{eq:log_FPK} to compute their log-likelihood. 
Table~\ref{tab:verifying_assumption} reports their empirical $W_1$ distance across several dimensions and values of $\hat{m}$. The results indicate that the chi-square model provides a good approximation of the log-likelihood distribution. An example empirical CDF is shown in Figure~\ref{fig:logp_density_controlled_reverse_sde_sample}.

\begin{table}[ht]
\centering
\begin{tabular}{c|c c c c c}
Dimension $n$ & 1 & 4 & 16 & 64 & 256 \\
\hline
$\hat{m}=2$  & 0.0877 & 0.0252 & 0.0961 & 0.1582 & 0.2725 \\
$\hat{m}=4$  & 0.0140 & 0.0545 & 0.0761 & 0.0853 & 0.2964 \\
$\hat{m}=8$  & 0.0113 & 0.0290 & 0.0484 & 0.1461 & 0.3186 \\
$\hat{m}=16$ & 0.0174 & 0.0545 & 0.0628 & 0.1824 & 0.3250
\end{tabular}
\caption{Empirical assessment of the chi-square approximation with $a=1/2$. Values report the empirical $W_1$ distance between $L_0$ and the fitted chi-square distribution.}
\label{tab:verifying_assumption}
\vspace{-0.2in}
\end{table}

\paragraph{Outlier generation.}
For the remaining experiments we set $\hat{m}=2$ and diffusion horizon $T=80$. We vary the dimension $n\in\{1,4,16,64,256\}$ and target $\rho\in\{0.2,0.3,0.4,0.5\}$. In each experiment, we verify that FOSD is empirically satisfied, and report the $W_1$ distance between the generated likelihood distribution $\eta_0$ and the reference $L_0$.

Our results are posted in Table~\ref{tab:W1_distance}. The generated samples match the target $\rho$ values while maintaining stochastic dominance.
Figure~\ref{fig:reverse_PF-ODE} compares histograms of samples generated by the uncontrolled and controlled PF-ODE in the one-dimensional case. The controlled dynamics produce samples concentrated in lower-likelihood regions such as the tails and the low-density region between the mixture modes.
\begin{table}[ht]
\centering
\begin{tabular}{c|c c c c c}
Dimension $n$ & 1 & 4 & 16 & 64 & 256 \\
\hline
$\rho=0.2$ & 0.2286 & 0.2051 & 0.2234 & 0.2248 & 0.2202 \\
$\rho=0.3$ & 0.3379 & 0.3056 & 0.3314 & 0.3368 & 0.3324 \\
$\rho=0.4$ & 0.4266 & 0.4078 & 0.4450 & 0.4498 & 0.4439 \\
$\rho=0.5$ & 0.5065 & 0.5140 & 0.5475 & 0.5627 & 0.5589
\end{tabular}
\caption{Outlier generation results for the Gaussian mixture model. Values report the empirical $W_1$ distances between the generated and reference likelihood distributions. The measured distances closely match the prescribed target values $\rho$, demonstrating accurate control over outlier magnitude while satisfying stochastic dominance in every experiment.}
\label{tab:W1_distance}
\vspace{-0.3in}
\end{table}
\begin{figure}[ht]
\centering
\includegraphics[width=0.9\linewidth,trim={0 0 0 0.3in},clip]{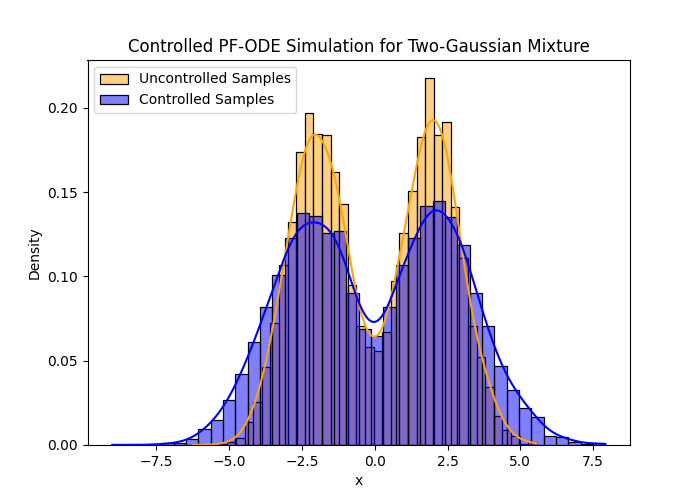}
\vspace{-0.1in}
\caption{Histogram comparison of PF-ODE samples in the 1D Gaussian mixture example with $\rho=0.5$.}
\label{fig:reverse_PF-ODE}
\includegraphics[width=0.9\linewidth,trim={0 0 0 0.3in},clip]{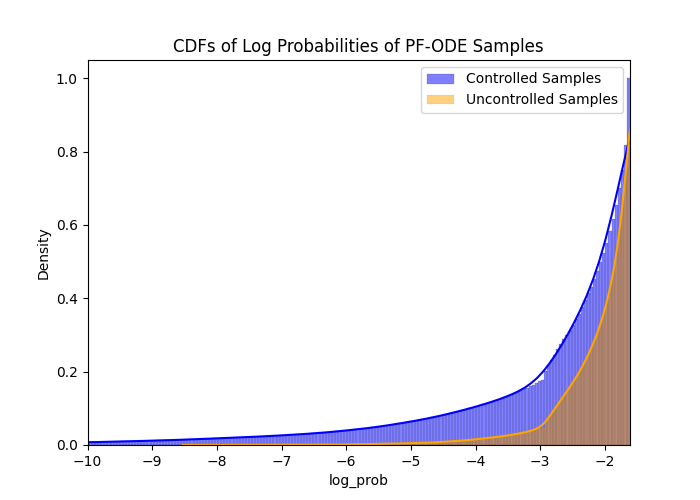}
\vspace{-0.1in}
\caption{Empirical CDFs of the likelihood distribution $L_0$ and the target distribution $\eta_0$ in the 4D example with $\rho=0.5$.}
\label{fig:logp_density_controlled_reverse_sde_sample}
\vspace{-0.2in}
\end{figure}

\subsection{Image Data: CIFAR-10}

We next evaluate the method on the CIFAR-10 dataset using the pretrained diffusion model \texttt{google/ddpm-cifar10-32} \cite{HoJA2020}. The likelihood distribution $L_0$ is approximated using samples generated by the reverse PF-ODE together with the log-FPK equation \eqref{eq:log_FPK}. 

Empirically, $L_0$ stochastically dominates a translated $-\tfrac12\chi_n^2$ distribution, supporting Assumption~\ref{as:chi_square}.
Their empirical $W_1$ distance is $9.137$.
We vary the target $\rho$ across $\{50,100,150,200,250\}$ and generate $\rho$-outliers by modifying the reverse diffusion dynamics. Table~\ref{tab:W1_distance_CIFAR} reports the empirical $W_1$ distances between the generated likelihood distribution and the reference distribution.
Although the realized $W_1$ shifts do not exactly match the prescribed $\rho$ values, they increase monotonically with $\rho$. Thus, even when the score is represented by a neural network, the control parameter provides a consistent mechanism for adjusting the degree of likelihood shift.

Figure~\ref{fig:cifar10_cdf} confirms that increasing $\rho$ progressively shifts the generated likelihood distribution toward lower values. At moderate control strengths, the samples in Figures~\ref{fig:sample_grid_rho_50} and~\ref{fig:sample_grid_rho_100} remain visually consistent with CIFAR-10, suggesting that the controller can access lower-likelihood regions without immediately destroying the learned data structure.

\begin{figure}[ht]
\centering
\includegraphics[width=0.9\linewidth]{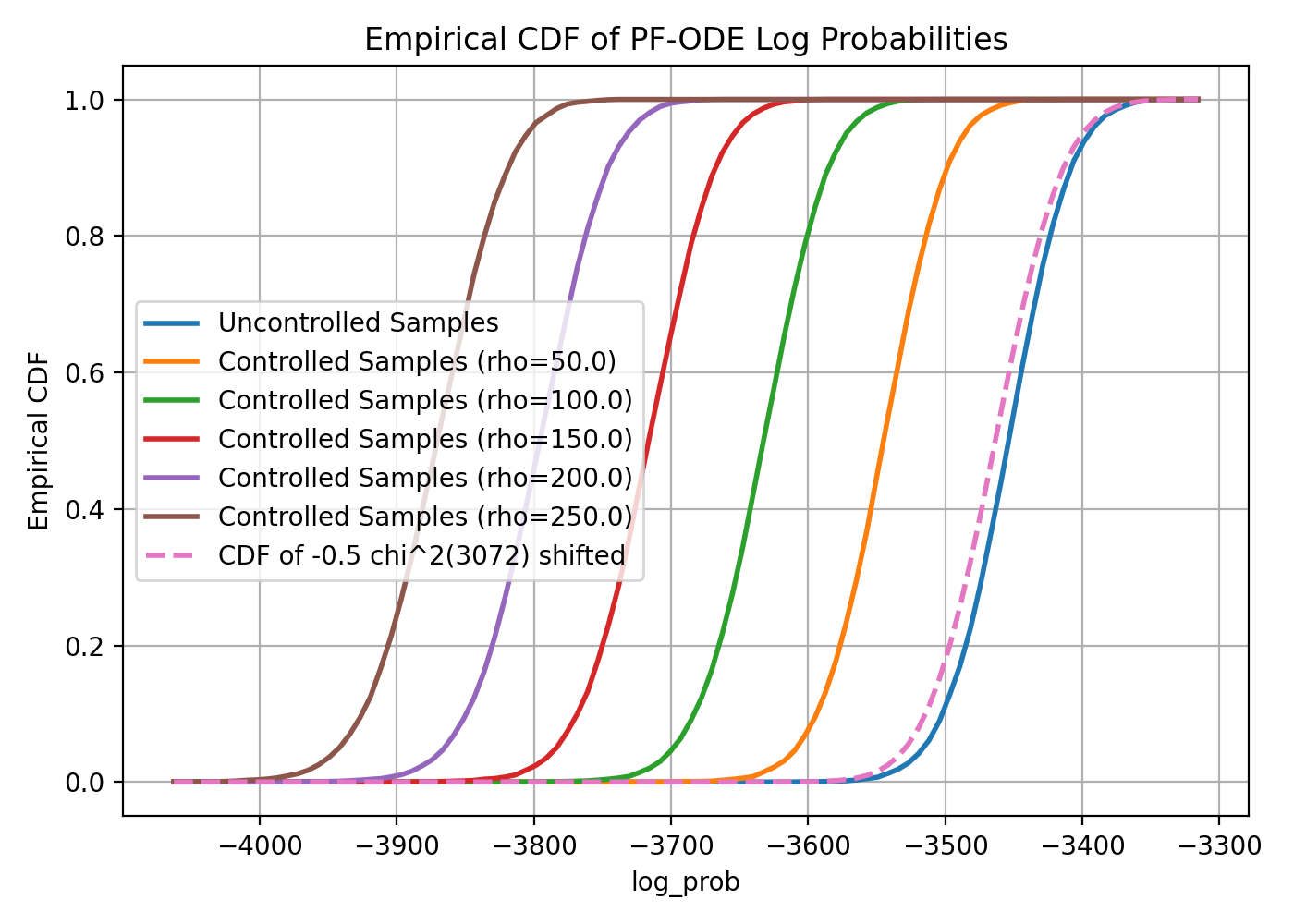}
\vspace{-0.1in}
\caption{Empirical CDFs of likelihood distributions for different values of $\rho$.}
\label{fig:cifar10_cdf}
\vspace{-0.1in}
\end{figure}
\begin{table}[ht]
\centering
\begin{tabular}{c|c c c c c}
$\rho$ & 50 & 100 & 150 & 200 & 250 \\
\hline
$W_1$ distance & 71.187 & 122.658 & 178.799 & 262.563 & 342.121
\end{tabular}
\caption{Outlier generation on CIFAR-10. Increasing the target parameter $\rho$ produces progressively larger $W_1$ shifts in the likelihood distribution, demonstrating that the proposed controller remains effective on image data.}
\label{tab:W1_distance_CIFAR}
\vspace{-0.1in}
\end{table}
\begin{figure*}
    \centering
    \begin{subfigure}[t]{0.45\textwidth}
        \centering
        \includegraphics[width=\linewidth]{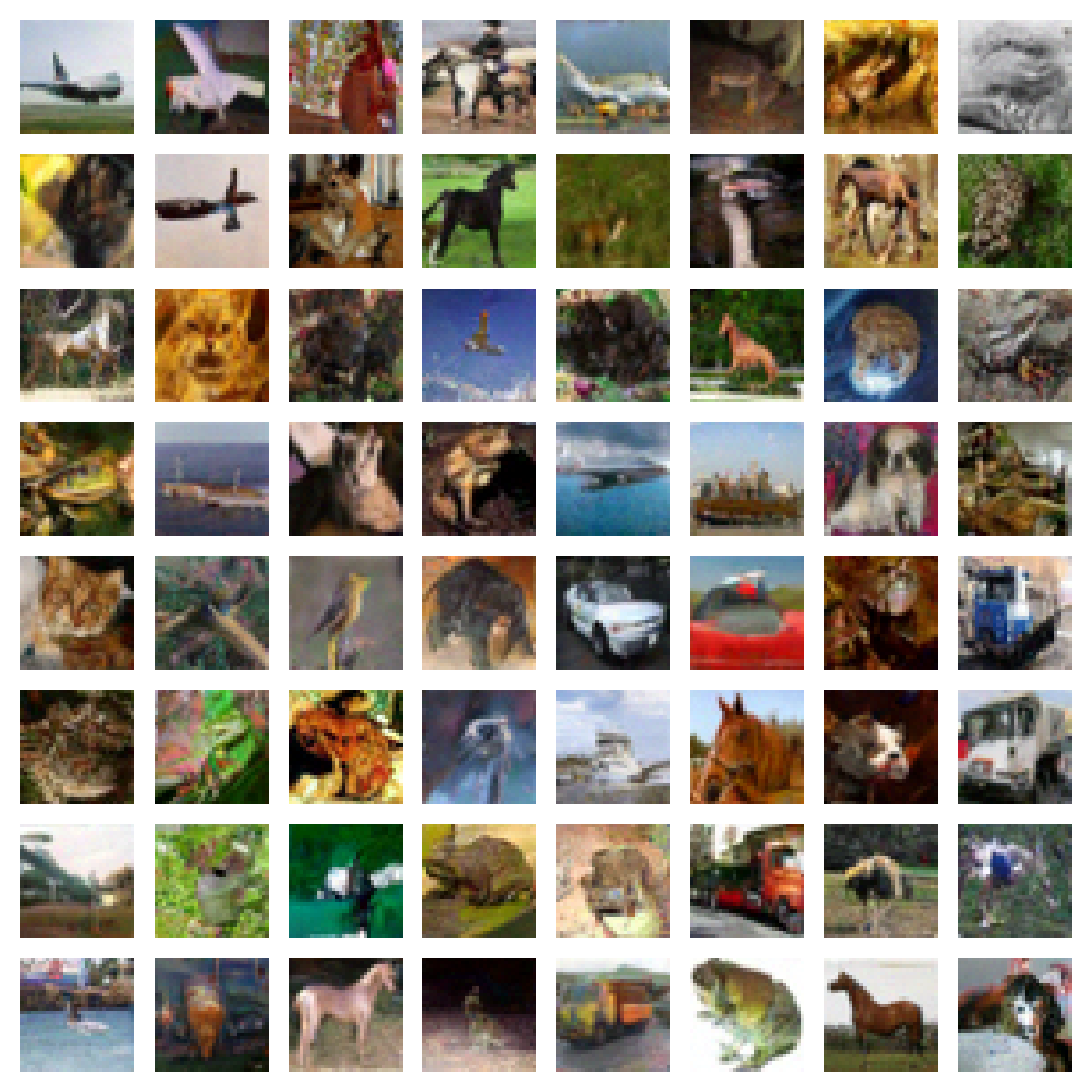}
        \vspace{-0.2in}
        \caption{Generated samples with $\rho=50$.}
        \label{fig:sample_grid_rho_50}
    \end{subfigure}%
    \hspace{0.05\textwidth}
    \begin{subfigure}[t]{0.45\textwidth}
        \centering
        \includegraphics[width=\linewidth]{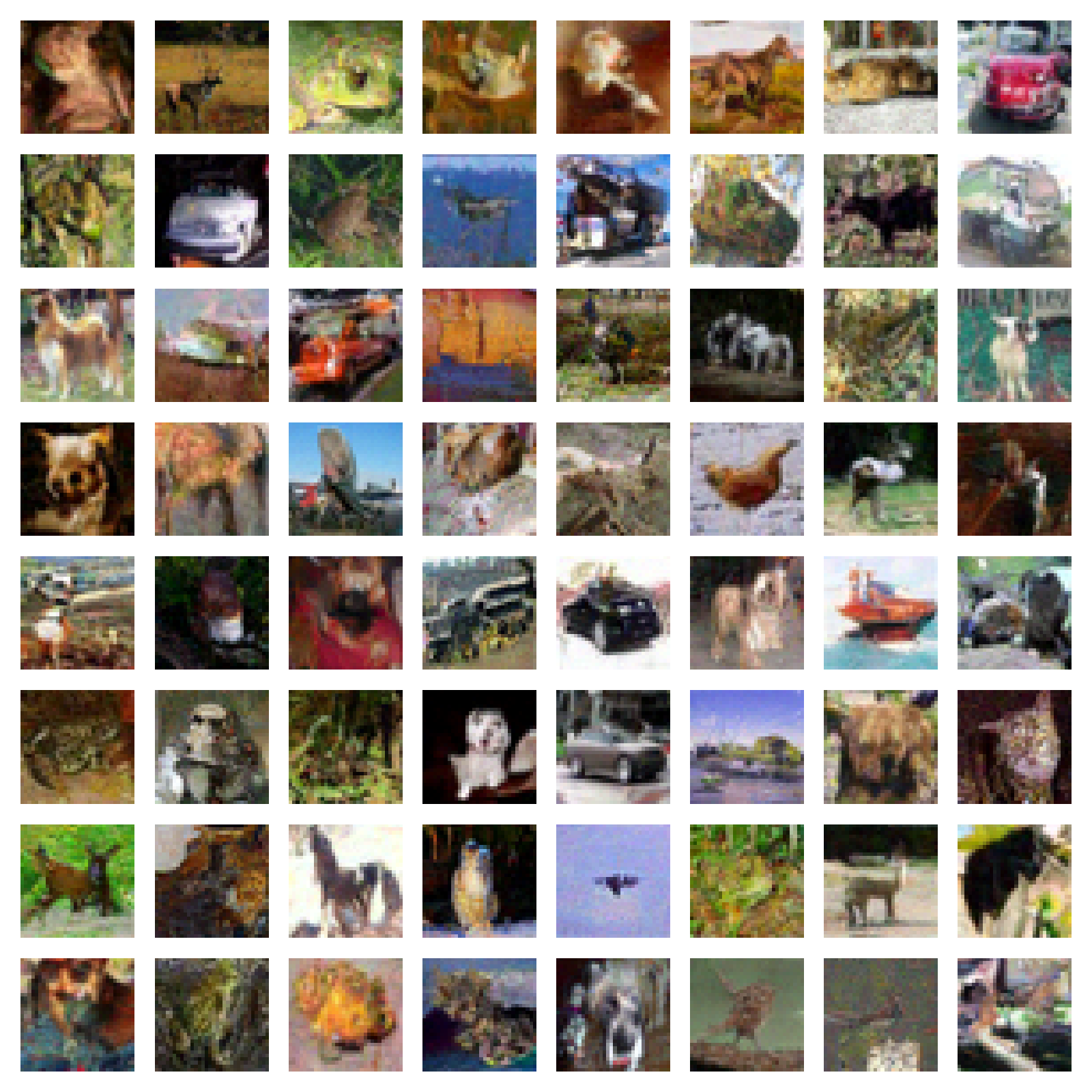}
        \vspace{-0.2in}
        \caption{Generated samples with $\rho=100$.}
        \label{fig:sample_grid_rho_100}
    \end{subfigure}
    \caption{Generated $\rho$-outliers in the CIFAR-10 image dataset. Although the likelihood values are lower, as demonstrated in Figure \ref{fig:cifar10_cdf}, the generated samples remain visually consistent with the underlying data distribution.}
    \label{fig:CIFAR}
    \vspace{-0.2in}
\end{figure*}



Overall, these experiments demonstrate that the proposed framework can steer diffusion sampling toward prescribed likelihood statistics. In both synthetic and image datasets, the controlled PF-ODE successfully shifts the likelihood distribution toward lower-probability regions while preserving the structure of the data distribution.

\section{Conclusion}

In this work, we introduced a distributional perspective on outliers motivated by the probabilistic structure of diffusion models. Instead of defining outliers as individual low-likelihood samples, we formalized them through the distribution of log-likelihood values and proposed a notion of $\rho$-outlier based on the discrepancy between likelihood distributions. Using this formulation, we developed a method for generating outliers by modifying the reverse-time diffusion dynamics. 
The key insight is that likelihood reweighting induces a simple modification of the score function via the RN derivative, which motivates the controlled PF-ODE used for sampling. 
Numerical experiments support the theoretical predictions on synthetic and image datasets. 
Future work will explore broader applications of this framework to distribution steering.

\bibliography{main}
\bibliographystyle{abbrv}

\end{document}